\def\eqref#1{equation~\ref{#1}}
\def\1{\bm{1}}
\DeclareMathAlphabet{\mathsfit}{\encodingdefault}{\sfdefault}{m}{sl}
\SetMathAlphabet{\mathsfit}{bold}{\encodingdefault}{\sfdefault}{bx}{n}
\newcommand*{\ldblbrace}{\{\mskip-6mu\{}
\newcommand*{\rdblbrace}{\}\mskip-6mu\}}
\newcommand*{\csls}{\ell}
\newcommand\ourmethod{\textsc{Policy-learn}}
\newtheorem{theorem}{Theorem}
\newtheorem{definition}{Definition}
\newtheorem{lemma}[theorem]{Lemma}
\newcommand{\revision}[1]{{\color{black}#1}}
\title{Efficient Subgraph GNNs by Learning Effective Selection Policies} 
\author{Beatrice Bevilacqua \\
Purdue University\\
\texttt{bbevilac@purdue.edu} \\
\And
Moshe Eliasof \\
University of Cambridge \\
\texttt{me532@cam.ac.uk} \\
\And
Eli Meirom \\
NVIDIA Research \\
\texttt{emeirom@nvidia.com} \\
\And
Bruno Ribeiro \\
Purdue University \\
\texttt{ribeiro@cs.purdue.edu} \\
\And
Haggai Maron \\
Technion \& NVIDIA Research \\
\texttt{hmaron@nvidia.com} \\
}
\begin{document}

\maketitle

\begin{abstract}
Subgraph GNNs are provably expressive neural architectures that learn graph representations from sets of subgraphs. Unfortunately, their applicability is hampered by the computational complexity associated with performing message passing on many subgraphs. In this paper, we consider the problem of learning to select a small subset of the large set of possible subgraphs in a data-driven fashion. We first motivate the problem by proving that there are families of WL-indistinguishable graphs for which there exist efficient subgraph selection policies: small subsets of subgraphs that can already identify all the graphs within the family. We then propose a new approach, called \ourmethod, that learns how to select subgraphs in an iterative manner. We prove that, unlike popular random policies and prior work addressing the same problem, our architecture is able to learn the efficient policies mentioned above. Our experimental results demonstrate that \ourmethod\ outperforms existing baselines across a wide range of datasets.  
\end{abstract}

\section{Introduction}
Subgraph GNNs~\citep{zhang2021nested, cotta2021reconstruction, papp2021dropgnn, bevilacqua2022equivariant, zhao2022from, papp2022theoretical, frasca2022understanding, qian2022ordered, zhang2023complete} have recently emerged as a promising research direction to overcome the expressive-power limitations of Message Passing Neural Networks (MPNNs)~\citep{morris2019weisfeiler, xu2019how}.
In essence, a Subgraph GNN first transforms an input graph into a bag of subgraphs, obtained according to a predefined generation policy. For instance, each subgraph might be generated by deleting exactly one node in the original graph, or, more generally, by marking exactly one node in the original graph, while leaving the connectivity unaltered~\citep{papp2022theoretical}. Then, it applies an equivariant architecture to process the bag of subgraphs, and aggregates the representations to obtain graph- or node-level predictions. The popularity of Subgraph GNNs can be attributed not only to their increased expressive power compared to MPNNs but also to their remarkable empirical performance, exemplified by their success on the ZINC molecular dataset~\citep{frasca2022understanding,zhang2023complete}.   

Unfortunately, Subgraph GNNs are hampered by their computational cost, since they perform message-passing operations on all subgraphs within the bag. In existing subgraph generation policies, there are at least as many subgraphs in the bag as there are nodes in the graph ($n$), leading to a computational complexity of $\mathcal{O}(n^2 \cdot d)$ contrasted to just $\mathcal{O}(n \cdot d)$ of a standard MPNN, where $d$ represents the maximal node degree. 
This computational overhead becomes intractable in large graphs, preventing the applicability of Subgraph GNNs on \revision{commonly-used} datasets.
To address this limitation, previous works~\citep{cotta2021reconstruction, bevilacqua2022equivariant, zhao2022from} have considered randomly sampling a small subset of subgraphs from the bag in every training epoch, showing that this approach not only significantly reduces the running time and computational cost, but also empirically retains good performance.

This raises the following question: \emph{is it really necessary to consider all the subgraphs in the bag?} To answer this question, consider the two non-isomorphic graphs in \Cref{fig:intro}, which are indistinguishable by MPNNs.
As demonstrated by prior work~\citep{cotta2021reconstruction}, encoding such graphs as bags of subgraphs, each obtained by marking a single node, makes them distinguishable. However, as can be seen from the Figure, the usage of all subgraphs is unnecessary, and distinguishability can be achieved
by considering only one subgraph per graph.
Thus, from the standpoint of expressive power, Subgraph GNNs can distinguish between these graphs using constant-sized bags, rather than the $n$-sized bags typically employed by existing methods.

The example in \Cref{fig:intro} shows the potential of subgraph selection policies returning a small number of subgraphs. However, as all node-marked subgraphs in these two graphs are isomorphic, randomly selecting one subgraph in each graph is sufficient for disambiguation.  
In \Cref{sec:motivation} we generalize this example and prove that there exist families of non-isomorphic graphs where a small number of \emph{carefully chosen} subgraphs is sufficient and necessary for identifying otherwise WL-indistinguishable graphs. In these families, the probability that a random selection baseline returns the required subgraphs tends to zero, motivating the problem of learning to select subgraphs instead of randomly choosing them.

\begin{figure}
    \centering
    \vspace{-20pt}
    \includegraphics[scale=0.55]{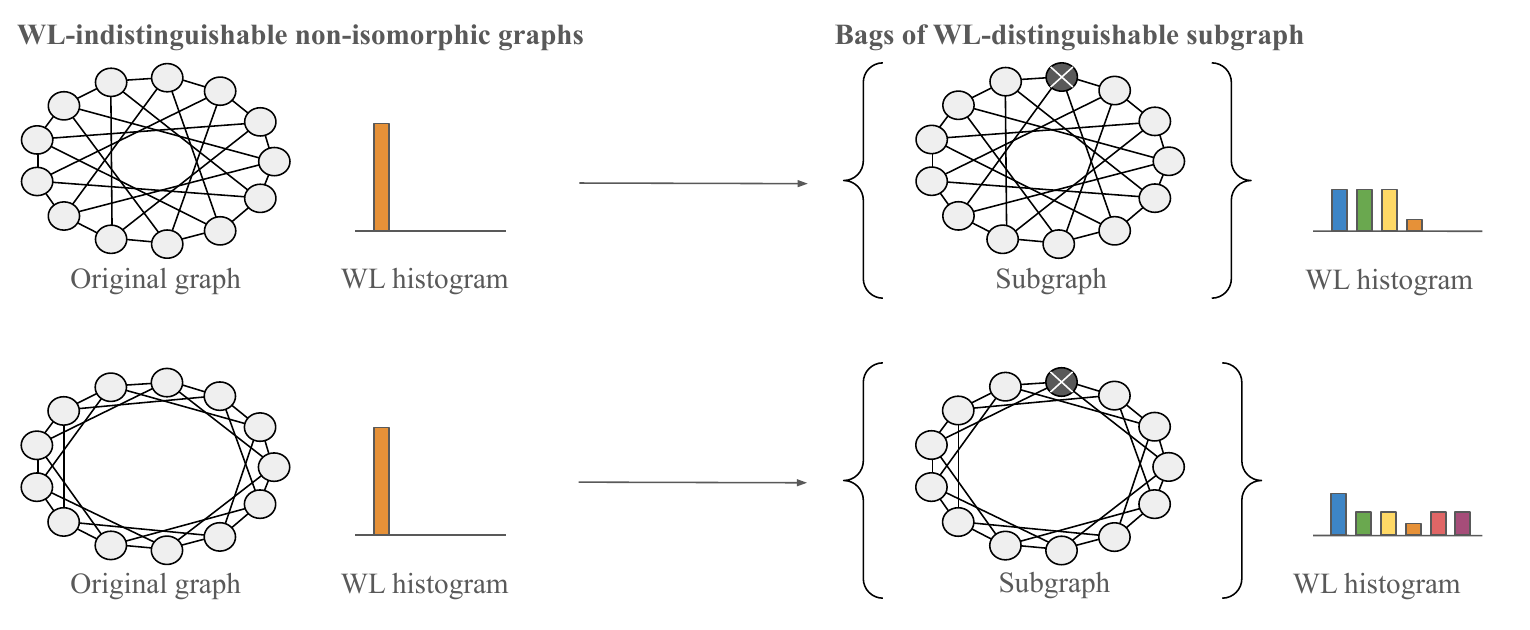}
    \caption{Sufficiency of small bags. Two non-isomorphic graphs (Circulant Skip Links graphs with 13 nodes and skip length 5 and 3, respectively) that can be distinguished using a bag containing only a single subgraph, generated by marking one node.}
    \label{fig:intro}
\end{figure}

In this paper, we aim to learn subgraph selection policies, returning a significantly smaller number of subgraphs compared to the total available in the bag, in a task- and graph-adaptive fashion. 
To that end, we propose an architecture, dubbed \ourmethod, composed of two Subgraph GNNs: (i) a \emph{selection} network, and (ii) a downstream \emph{prediction} network. The selection network learns a subgraph selection policy and iteratively constructs a bag of subgraphs. This bag is then passed to the prediction network to solve the downstream task of interest. The complete architecture is trained in an end-to-end fashion according to the downstream task's loss function, by leveraging the straight-through Gumbel-Softmax estimator~\citep{jang2016categorical,maddison2017concrete} to allow differentiable sampling from the selection policy.

As we shall see, our choice of using a Subgraph GNN as the selection network, as well as the sequential nature of our policy generation, provide additional expressive power compared to random selection and previous approaches \citep{qian2022ordered} and allow us to learn a more diverse set of policies. Specifically, we prove that our framework can learn to select the subgraphs required for disambiguation in the families of graphs mentioned above, while previous approaches provably cannot do better than randomly selecting a subset of subgraphs. 

Experimentally, we show that \ourmethod\ is competitive with full-bag Subgraph GNNs, even when learning to select a dramatically smaller number of subgraphs. Furthermore, we demonstrate its advantages over random selection baselines and previous methods on a wide range of tasks.

\textbf{Contributions.} This paper provides the following contributions: \begin{enumerate*}[label=(\arabic*)] \item A new framework for learning subgraph selection policies; \item A theoretical analysis motivating the policy learning task as well as our architecture; and \item An experimental evaluation of the new approach demonstrating its advantages \revision{in terms of predictive performance and reduced runtime}.\end{enumerate*}

\section{Related Work}
\textbf{Expressive power limitations of GNNs.} Multiple works have been proposed to overcome the expressive power limitations of MPNNs, which are constrained by the WL isomorphism test~\citep{xu2019how,morris2019weisfeiler}. Other than subgraph-based methods, which are the focus of this work, these approaches include:
\begin{enumerate*}[label=(\arabic*)]
    \item Architectures aligned to the $k$-WL hierarchy~\citep{morris2019weisfeiler, morris2020weisfeiler, maron2018invariant, maron2019provably};
    \item Augmentations of the node features with node identifiers~\citep{abboud2020surprising, sato2020survey, dasoulas2021coloring, murphy2019relational};
    \item Models leveraging additional information, such as homomorphism and subgraph counts~\citep{barcelo2021graph, bouritsas2022improving}, graph polynomials~\citep{puny2023equivariant}, and simplicial and cellular complexes~\citep{bodnar2021weisfeilerA,bodnar2021weisfeilerB}.
\end{enumerate*}
We refer the reader to the survey by \citet{morris2021weisfeiler} for a review of different methods.

\textbf{Subgraph GNNs.}
Despite the differences in the specific layer operations, the idea of constructing bags of subgraphs to be processed through message-passing layers has been recently proposed by several concurrent works.
\citet{cotta2021reconstruction} proposed to first obtain subgraphs by deleting nodes in the original graph, and then apply an MPNN to each subgraph separately, whose representations are finally aggregated through a set function. \citet{zhang2021nested} considered subgraphs obtained by constructing the local ego-network around each node in the graph, an approach also followed by \citet{zhao2022from}, which further incorporates feature aggregation modules that aggregate node representations across different subgraphs. \citet{bevilacqua2022equivariant} developed two classes of Subgraph GNNs, which differ in the symmetry group they are equivariant to: (i) DS-GNN, which processes each subgraph independently, and (ii) DSS-GNN, which includes feature aggregation modules \revision{of the same node in different subgraphs}. \citet{huang2022boosting, yan2023efficiently, papp2022theoretical} further studied the expressive power of Subgraph GNNs, while \citet{zhou2023relational} proposed a framework that encompasses several existing methods.
Recently, \citet{frasca2022understanding, qian2022ordered} provided a theoretical analysis of node-based Subgraph GNNs, demonstrating that they are upper-bounded by 3-WL, while \citet{zhang2023complete} presented a complete hierarchy of existing architectures with growing expressive power. Finally, other works such as \citet{rong2019dropedge, vignac2020building, papp2021dropgnn, you2021identity} can be interpreted as Subgraph GNNs.

Most related to our work is the recent paper of \cite{qian2022ordered}, which, to the best of our knowledge, presented the first framework that learns a subgraph selection policy rather than employing a full bag or a random selection policy. Our work differs from theirs in four main ways: \begin{enumerate*}[label=(\roman*)] \item Bag of subgraphs construction: Our method generates the bag sequentially, compared to their one-shot generation; \item Expressive selection GNN architecture: We parameterize the selection network using a Subgraph GNN, that is more expressive than the MPNN used by \citet{qian2022ordered}; \item Differentiable sampling mechanism: We use the well-known Gumbel-Softmax trick to enable gradient backpropagation through the discrete sampling process, while \citet{qian2022ordered} use I-MLE~\citep{niepert2021implicit}; and \item Motivation: our work is primarily motivated by scenarios where a subset of subgraphs is sufficient for maximal expressive power, while \citet{qian2022ordered} aim to make higher-order generation policies (where each subgraph is generated from tuples of nodes) practical. \end{enumerate*} We prove that our choices lead to a framework that can learn subgraph distributions that cannot be expressed with \cite{qian2022ordered}, and show that our framework performs better \revision{than \cite{qian2022ordered}} on real-world datasets.

\section{Problem Formulation}
Let $G = (A, X)$, be a node-attributed, undirected, simple graph with $n$ nodes, where $A \in \mathbb{R}^{n \times n}$ represents the adjacency matrix of $G$ and $X \in \mathbb{R}^{n \times c}$ is the node feature matrix. 
Given a graph $G$, a Subgraph GNN first transforms $G$ into a bag (multiset) of subgraphs, $\mathcal{B}_{{G}} = \ldblbrace S_1, S_2, \dots, S_m \rdblbrace$, obtained from a predefined generation policy, where $S_i = (A_i, X_i)$ denotes subgraph $i$. Then, it processes $\mathcal{B}_{{G}}$ through a stacking of (node- and subgraph-) permutation equivariant layers, followed by a pooling function to obtain graph or node representations to be used for the final predictions.

In this paper, we focus on the node marking (NM) subgraph generation policy~\citep{papp2022theoretical}, where each subgraph is obtained by marking a single node in the graph, without altering the connectivity of the original graph\footnote{Although technically, NM does not generate subgraphs in the traditional sense, the marked versions of the original graph are commonly regarded as subgraphs~\citep{papp2022theoretical}.}. More specifically, for each subgraph $S_i = (A_i, X_i)$, there exists a node $v_i$ such that $A_i = A$ and $X_i = X \oplus \chi_{v_i}$, where $\oplus$ denotes channel-wise concatenation and $\chi_{v_i}$ is a one-hot indicator vector for the node $v_i$. We refer to the node $v_i$ as the root of $S_i$, which we will equivalently denote as $S_{v_i}$. Notably, since each subgraph is obtained by marking exactly one node, there is a bijection between nodes and subgraphs, making the total number of subgraphs under the NM policy equal to the number of nodes $m=n$. Finally, we remark that we chose NM as the generation policy as it generalizes several other policies~\citep{zhang2023complete}. Any other node-based policy, preserving this bijection between nodes and subgraphs, can be equivalently considered.

\paragraph{Objective.} Motivated by their recent success, we would like to employ a Subgraph GNN to solve a given graph learning task (e.g., graph classification). Aiming to mitigate the computational overhead of using the full bag $\mathcal{B}_{{G}}$, we wish to propose a method for learning a subgraph selection policy $\pi(G)$ that, given the input graph $G$, returns a subset of $\mathcal{B}_{G}$ to be used as an input to the Subgraph GNN in order to solve the downstream task at hand.

We denote by $\mathcal{B}_{G}^{T} \subset \mathcal{B}_{G}$ the output of $\pi(G)$, consisting of the original graph and $T$ \emph{selected} subgraphs. That is, 
$\mathcal{B}_{G}^{T} = \ldblbrace G, S_{v_1}, S_{v_2}, \dots, S_{v_T} \rdblbrace$, where node $v_j$  in $G$ is the root node of $S_{v_j}$.  \label{sec:formulation}

\section{Insights for the Subgraph Selection Learning Problem}
In this section, we motivate the problem of learning to select subgraphs in the context of Subgraph GNNs. Specifically, we seek to address two research questions: \begin{enumerate*}[label=(\roman*)] \item \emph{Are there subgraph selection policies that return small yet effective bags of subgraphs?}, and \item \emph{How do these policies compare to strategies that uniformly sample a random subset of all possible subgraphs in the bag?}.\end{enumerate*} In the following, we present our main results, while details and proofs are provided in \Cref{app:proofs}.

\vspace{-5pt}
\paragraph{Powerful policies containing only a single subgraph.} To address the first question, we consider Circulant Skip Links (CSL) graphs, an instantiation of which was shown in \Cref{fig:intro} (CSL($13$, $5$) and CSL($13$, $3$)). 
\begin{definition}[CSL graph~\citep{murphy2019relational}]\label{def:csl}
Let $n$, $k$, be such that $n$ is co-prime with $k$ and $k < n - 1$. A CSL graph, CSL($n$, $k$) is an undirected graph with $n$ nodes labeled as $\{0, \dots, n - 1\}$, whose edges form a cycle and have skip links. That is, the edge set $E$ is defined by a cycle formed by $(j, j+1) \in E$ for $j \in \{0, \dots, n-2\}$, and  $(n-1, 0) \in E$, along with skip links defined recursively by the sequence $(s_i, s_{i+1}) \in E$, with $s_1 = 0$ and $s_{i+1} = (s_i + k) \, \text{mod} \, n$.
\end{definition}

While CSL graphs are WL-indistinguishable~\citep{murphy2019relational}, they become distinguishable when considering the full bag of node-marked subgraphs~\citep{cotta2021reconstruction}. Furthermore, since all subgraphs in the bag are isomorphic, it is easy to see that distinguishability is obtained even when considering a single subgraph per graph. This observation motivates the usage of small bags, as there exist cases where the expressive power of the full bag can be obtained with a very small subset of subgraphs.

The extreme case of CSL graphs, however, is too simplistic: randomly selecting one subgraph in each graph is sufficient for disambiguation. This leads to our second question, and, in the following, we build upon CSL graphs to define a family of graphs where random policies are not effective. As we shall see next, in these families only specific small subgraph selections lead to complete identification of the isomorphism type of each graph.

\vspace{-5pt}
\paragraph{Families of graphs that require specific selections of $\csls$ subgraphs.}
We obtain our family of graphs starting from CSL graphs, which we use as building blocks to define an $(n, \csls)$-CSL graph.
\begin{wrapfigure}[11]{r}{0.55\textwidth}
    \centering
    \includegraphics[scale=0.5]{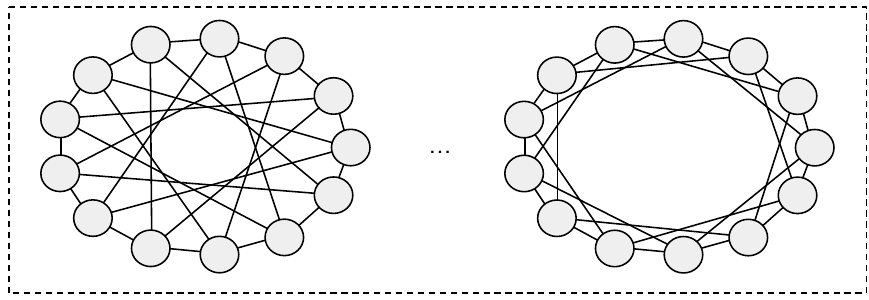}
    \caption{An $(n, \csls)$-CSL graph is obtained from $\csls$ disconnected, non-isomorphic CSL graphs with $n$ nodes, where $n=13$ in this case.}
    \label{fig:l-csls}
\end{wrapfigure}

\vspace{-5pt}
\begin{definition}[$(n, \csls)$-CSL graph]\label{def:2csl}
A $(n, \csls)$-CSL graph, denoted as $(n, \csls)$-CSL($n$, $(k_1, \dots, k_\csls))$, is a graph with $\csls \cdot n$ nodes obtained from $\csls$ disconnected non-isomorphic CSL (sub-)graphs with $n$ nodes, CSL($n$, $k_i$), $i \in \{1, \dots, \csls\}$. Note that the maximal value of $\csls$ depends on $n$.
\end{definition}

\vspace{-5pt}
Our interest in these graphs lies in the fact that the family of non-isomorphic $(n, \csls)$-CSL graphs contains WL-indistinguishable graphs, as we show in the following.

\begin{restatable}[$(n, \csls)$-CSL graphs are WL-indistinguishable]{theorem}{gindistinguishable}\label{theo:g-ind}
Let $\mathcal{G}_{n, \csls}$ be the family of \emph{non-isomorphic} $(n, \csls)$-CSL graphs (\Cref{def:2csl}). All graphs in $\mathcal{G}_{n, \csls}$ are WL-indistinguishable.
\end{restatable}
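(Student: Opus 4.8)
The plan is to reduce the statement to the known fact that individual $\mathrm{CSL}(n,k)$ graphs are WL-indistinguishable from one another, and then argue that color refinement cannot "see" which CSL blocks are present in an $(n,\csls)$-CSL graph, only how many of each there are — and since all $(n,\csls)$-CSL graphs on $\csls \cdot n$ nodes use exactly $\csls$ blocks, their refined color multisets coincide. First I would recall that on any single $\mathrm{CSL}(n,k)$ graph, color refinement is stable already at initialization: the graph is vertex-transitive and $2$-regular (each node has exactly two cycle neighbors; note the skip links, being an arithmetic progression with step $k$ coprime to $n$, simply form another Hamiltonian cycle, so each node actually has degree $4$ unless $k=2$ — in any case the graph is regular and vertex-transitive), so every node receives the same stable color, and that color depends only on $n$ (and the common degree), not on $k$. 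This is precisely the content of \citet{murphy2019relational}'s indistinguishability result, which I would cite rather than reprove.

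Next I would handle the disjoint union. Color refinement run on a disjoint union $H_1 \sqcup \dots \sqcup H_\csls$ restricts on each component to color refinement run on that component alone, because no messages cross between components. Hence in an $(n,\csls)$-CSL graph every node, regardless of which block it lies in, ends up with the single stable CSL-color $c(n)$ described above. Therefore the final histogram of colors is the same for every graph in $\mathcal{G}_{n,\csls}$: all $\csls \cdot n$ nodes carry color $c(n)$. Since the WL test declares two graphs indistinguishable exactly when these color histograms agree at every refinement round (and here they are constant in the round), all graphs in $\mathcal{G}_{n,\csls}$ are pairwise WL-indistinguishable.

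The one technical point to be careful about — and the main obstacle — is making the "color refinement is blind to $k$" claim fully rigorous, i.e. verifying that $\mathrm{CSL}(n,k)$ is genuinely regular and that its stable coloring is the constant coloring independent of $k$. This needs the observation that the skip-link edges $(s_i,s_{i+1})$ with $s_{i+1}=s_i+k \bmod n$ and $\gcd(n,k)=1$ visit all $n$ vertices and give each vertex exactly two skip neighbors, so together with the base cycle every vertex has the same degree; combined with the obvious rotational automorphism $j \mapsto j+1 \bmod n$ (which is an automorphism of both the cycle and the skip-link cycle), the automorphism group acts transitively, forcing the stable color to be constant and hence determined only by $n$. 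Once that is in place — or simply invoked from \citet{murphy2019relational} — the disjoint-union step is routine, so I would keep the write-up short: state the single-CSL fact, note refinement decomposes over connected components, and conclude the histograms match.
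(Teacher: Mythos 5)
Your proof is correct and follows essentially the same route as the paper's: both arguments reduce to the observation that every $(n,\csls)$-CSL graph is $4$-regular, so colour refinement stabilises at the constant colouring after one round and all graphs in $\mathcal{G}_{n,\csls}$ share the same colour histogram. The vertex-transitivity detour is unnecessary (regularity alone forces the constant stable colouring), and the parenthetical ``unless $k=2$'' is a small slip --- CSL$(n,2)$ is still $4$-regular since $j+2\neq j\pm 1$ --- but neither affects the argument.
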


While these graphs are WL-indistinguishable, we can distinguish all pairs when utilizing the full bag of node-marked subgraphs. More importantly, the full bag allows us to fully identify the isomorphism type of all graphs within the family, which implies the distinguishability of pairs of graphs\footnote{\revision{Note that we consider the family of non-isomorphic $(n, \csls)$-CSL graphs, thus excluding cases where the choices of $k_1, \dots, k_\csls$ lead to isomorphic connected components.}}.
As we show next, it is however unnecessary to consider the entire bag of all the $\csls \cdot n$ subgraphs. Instead, there exists a subgraph selection policy that returns only $\csls$ subgraphs, a significantly smaller number than the total in the full bag, which are provably sufficient for identifying the isomorphism type of all graphs within the family. This identification is attained when the root nodes of all $\csls$ subgraphs belong to different CSL (sub-)graphs. Furthermore, we prove that this condition is not only sufficient but also necessary. In other words, the isomorphism-type identification of all graphs within the family only occurs when there is at least one node-marked subgraph obtained from each CSL (sub-)graph. 

\begin{restatable}[There exists an efficient $\pi$ that fully identifies $(n, \csls)$-CSL graphs]{proposition}{gdistinguishable}\label{theo:g-dis}
Let $\mathcal{G}_{n, \csls}$ be the family of non-isomorphic $(n, \csls)$-CSL graphs  (\Cref{def:2csl}). A node-marking based subgraph selection policy $\pi$ can identify the isomorphism type of any graph $G$ in $\mathcal{G}_{n, \csls}$ if and only if its bag has a marking in each CSL (sub-)graph.
\end{restatable}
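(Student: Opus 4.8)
The plan is to establish the two directions of the "if and only if" separately, using the key fact that each node-marked subgraph $S_v$ of an $(n,\csls)$-CSL graph is a disjoint union of one marked CSL($n$,$k_i$) together with $\csls-1$ unmarked CSL (sub-)graphs, and that a Subgraph GNN operating on the NM policy can read off, for each subgraph in its input bag, (a) the isomorphism type of the marked connected component, i.e.\ the value $k_i$ of the CSL (sub-)graph containing the root, and (b) the multiset of isomorphism types of the \emph{unmarked} connected components. The first point follows because CSL graphs are distinguishable by their node-marked subgraphs (this is exactly the observation from \Cref{fig:intro} and \cite{cotta2021reconstruction}), so an MPNN on a single marked CSL component can compute its skip length; the second point follows because, even though an MPNN cannot by itself distinguish two CSL graphs, it \emph{can} compute a representation of each unmarked component that, combined with the knowledge of which skip lengths appear, lets the pooling layer reconstruct the full multiset $\{k_1,\dots,k_\csls\}$ — I will need to argue this carefully, presumably by noting that once one component is marked and hence identified, the remaining components' types are pinned down by the graph's total structure, or more robustly by observing that the DSS-type cross-subgraph aggregation over the bag lets the network compare components across subgraphs.

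\textbf{Sufficiency.} Suppose the bag $\mathcal{B}_G^T$ contains, for every $i\in\{1,\dots,\csls\}$, at least one subgraph whose root lies in the CSL($n$,$k_i$) component. For each such subgraph the selection/prediction network extracts the skip length $k_i$ of the marked component as described above. Taking the union over all subgraphs in the bag yields the full set $\{k_1,\dots,k_\csls\}$, which by \Cref{def:2csl} (the components are pairwise non-isomorphic, and the graph is the disjoint union of exactly these) uniquely determines the isomorphism type of $G$ within $\mathcal{G}_{n,\csls}$. Hence $\pi$ identifies the isomorphism type of every $G\in\mathcal{G}_{n,\csls}$.

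\textbf{Necessity.} Conversely, suppose the bag misses some component, say no root lies in CSL($n$,$k_j$). The idea is to exhibit two non-isomorphic graphs $G,G'\in\mathcal{G}_{n,\csls}$ and a choice of root nodes such that the resulting bags are isomorphic as multisets of (marked) graphs, so the Subgraph GNN cannot tell them apart. Concretely, pick $G = (n,\csls)\text{-CSL}(n,(k_1,\dots,k_{j-1},k_j,k_{j+1},\dots,k_\csls))$ and $G'$ obtained by replacing $k_j$ with a different admissible skip length $k_j'$ not among the others; since no subgraph in the bag is rooted in the $j$-th component, every subgraph $S_v$ in $\mathcal{B}_G^T$ has its marked component equal to some CSL($n$,$k_i$) with $i\neq j$, and its unmarked components are $\{$CSL($n$,$k_i$)$: i\neq j, i\neq \text{(root index)}\} \cup \{$CSL($n$,$k_j$)$\}$. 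Replacing CSL($n$,$k_j$) by CSL($n$,$k_j'$) everywhere changes one unmarked component in each subgraph, but — and this is the crux — an (unmarked) CSL($n$,$k_j$) and CSL($n$,$k_j'$) component is invisible to the WL/MPNN computation (CSL graphs being WL-indistinguishable, a fact we may invoke from \Cref{def:csl}'s surrounding discussion), so each $S_v\in\mathcal{B}_G^T$ is mapped to the same representation as the corresponding $S_v'\in\mathcal{B}_{G'}^T$. Therefore the two bags are indistinguishable and $\pi$ fails to identify the isomorphism type.

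\textbf{Main obstacle.} The delicate point is the necessity direction's claim that an \emph{unmarked} CSL component contributes nothing distinguishable to the Subgraph GNN's computation even in the presence of the cross-subgraph (DSS-style) aggregation layers, together with the sufficiency direction's converse claim that a \emph{marked} CSL component's skip length \emph{is} recoverable. Both hinge on a precise statement of what an NM-based Subgraph GNN can compute on disjoint unions — essentially that it acts componentwise on unmarked parts (so two WL-equivalent unmarked components are interchangeable) but can fully resolve a marked CSL component. I would isolate this as a lemma, proving it by unrolling the message-passing recursion: the color of any node in an unmarked CSL($n$,$k$) component after any number of rounds depends only on $n$ (not $k$), matching the standard proof that CSL graphs are WL-indistinguishable, whereas a single marked node breaks the circulant symmetry and lets the distances-to-the-mark recover $k$. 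Care is needed to ensure the argument is robust to whatever extra aggregation the specific Subgraph GNN variant uses, which is why phrasing everything at the level of "the bag as a multiset of marked graphs, up to WL-equivalence of components" is the safest route.
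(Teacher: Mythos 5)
Your proof follows essentially the same route as the paper's: sufficiency by invoking the node-marking result of \citet{cotta2021reconstruction} componentwise (one marked root per CSL (sub-)graph recovers that component's skip length, and the collection of these determines $G$), and necessity by the contrapositive construction of a second graph $G'$ differing from $G$ only in the uncovered component. The additional lemma you flag — that an \emph{unmarked} CSL component contributes the same WL colors regardless of its skip length (by $4$-regularity and disconnectedness, even under cross-subgraph aggregation), while a marked one is fully resolved — is exactly the justification the paper leaves implicit, so your argument is correct and, if anything, slightly more careful.
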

The above results demonstrate the existence of efficient and more sophisticated subgraph selection policies that enable the identification of isomorphism types. Remarkably, these can return bags as small as containing $\csls$ subgraphs, as long as each of them is obtained by marking a node in a different CSL (sub-)graph.
The insight derived from the necessary condition is equally profound: a random selection approach, which uniformly samples $\csls$ subgraphs per graph, does not ensure identification of the isomorphism type, because the subgraphs might not be drawn from distinct CSL (sub-)graphs. This finding highlights the sub-optimality of random selection strategies, as we formalize below.

\begin{figure}[t]
    \centering
    \includegraphics[scale=0.45]{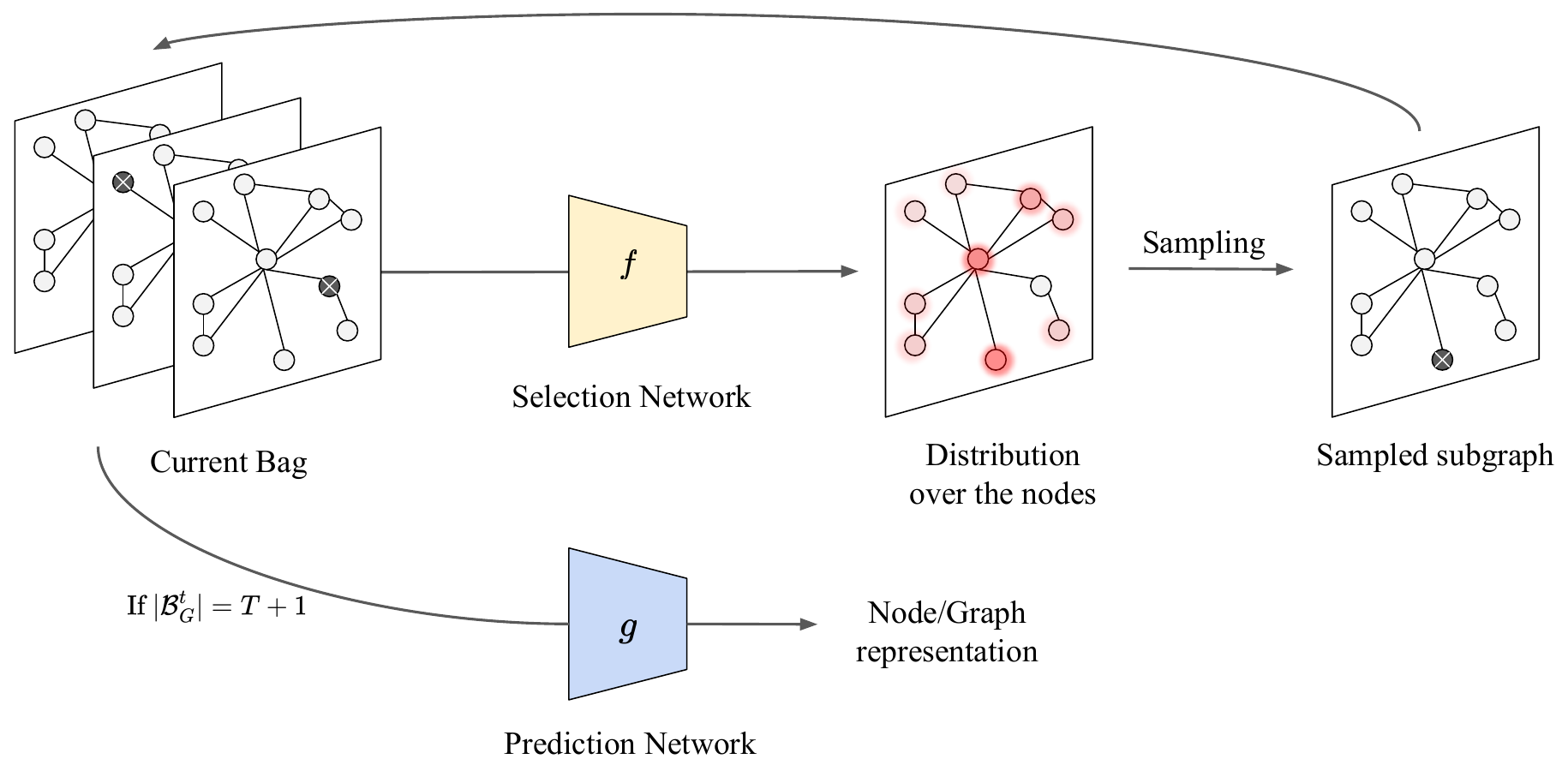}
    \caption{An overview of \ourmethod. \ourmethod\ consists of two Subgraph GNNs: a selection network and a prediction network. The selection network generates the bag of subgraphs by iteratively parameterizing a probability distribution over the nodes of the original graph. When the bag size reaches its maximal size $T+1$ (the original graph plus $T$ selections), it is passed to the prediction network for the downstream task. }
    \label{fig:framework}
    \vspace{-6pt}
\end{figure}

\begin{restatable}[A random policy cannot efficiently identify $(n, \csls)$-CSL graphs]{proposition}{grandom}\label{theo:g-ran}
Let $G$ be any graph in the family of non-isomorphic $(n, \csls)$-CSL graphs, namely $\mathcal{G}_{n, \csls}$ (\Cref{def:2csl}). The probability that a random policy, which uniformly samples $\csls$ subgraphs from the full bag, can successfully identify the isomorphism type of $G$ is ${\csls!}/{\csls^\csls}$, which tends to 0 as $\csls$ increases. 
\end{restatable}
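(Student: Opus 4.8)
\textbf{Proof proposal for \Cref{theo:g-ran}.}
The plan is to reduce the statement to an elementary balls-into-bins computation via the characterization in \Cref{theo:g-dis}. By that proposition, the random policy succeeds in identifying the isomorphism type of $G$ exactly when the bag it returns has a marked node in each of the $\csls$ constituent CSL (sub-)graphs of $G$ (recall from \Cref{def:2csl} that $G$ is a disjoint union of $\csls$ CSL graphs, each on $n$ nodes). Since the policy selects exactly $\csls$ subgraphs and there are exactly $\csls$ CSL (sub-)graphs, the pigeonhole principle turns ``at least one root in each CSL (sub-)graph'' into ``exactly one root in each CSL (sub-)graph'': the multiset of $\csls$ selected roots must hit all $\csls$ CSL (sub-)graphs bijectively.

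I would then set up the probability space explicitly. Each CSL (sub-)graph contains $n$ of the $\csls \cdot n$ nodes of $G$, so a uniformly random root lands in any prescribed CSL (sub-)graph with probability $n/(\csls n) = 1/\csls$; hence, at the level of ``which CSL (sub-)graph the root belongs to'', the $\csls$ i.i.d.\ uniform selections of the random policy become $\csls$ i.i.d.\ uniform draws over the $\csls$ CSL (sub-)graphs. The success event is then precisely the event that a uniformly random function $[\csls] \to [\csls]$ (from selection step to CSL (sub-)graph) is a bijection. Counting gives $\csls!$ bijections out of $\csls^{\csls}$ functions, hence success probability $\csls!/\csls^{\csls}$; note that this is independent of $n$ and of the particular $G \in \mathcal{G}_{n, \csls}$, as claimed. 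To conclude that $\csls!/\csls^{\csls} \to 0$ I would avoid Stirling and use the direct bound $\csls!/\csls^{\csls} = \prod_{i=1}^{\csls} (i/\csls) \le 2/\csls^2$ for $\csls \ge 2$ (keep the $i = 1$ and $i = 2$ factors, bound the remaining factors by $1$), which tends to $0$.

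The step needing the most care is pinning down the model of the random baseline together with the reduction to ``exactly one per CSL (sub-)graph'': the ``at least one'' condition of \Cref{theo:g-dis} collapses to ``exactly one'' only because the number of selections coincides with the number of CSL (sub-)graphs. Moreover, the clean constant $\csls!/\csls^{\csls}$ corresponds to treating the selections as i.i.d.\ uniform over nodes (equivalently, sampling with replacement); under uniform sampling without replacement one instead obtains
$\frac{\csls!\, n^{\csls}}{(\csls n)(\csls n - 1)\cdots(\csls n - \csls + 1)}$,
which agrees with $\csls!/\csls^{\csls}$ in the large-$n$ limit and in any case still decays to $0$ with $\csls$, so the qualitative conclusion is unchanged.
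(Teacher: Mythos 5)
Your proposal is correct and follows essentially the same route as the paper's proof: reduce via the necessity direction of \Cref{theo:g-dis} to the event that the $\csls$ roots land in $\csls$ distinct CSL (sub-)graphs, count $\csls!$ favorable outcomes out of $\csls^{\csls}$, and conclude. Your treatment is in fact more careful than the paper's on two points the paper glosses over — making explicit that the stated constant $\csls!/\csls^{\csls}$ corresponds to i.i.d.\ (with-replacement) sampling while without-replacement sampling gives a nearby quantity with the same limit, and giving an elementary bound for the decay to zero — but these are refinements of the same argument, not a different approach.
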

The above result has important implications for Subgraph GNNs when coupled with random policies. It indicates that Subgraph GNNs combined with a random subgraph selection policy, are unlikely to identify the isomorphism type of $G$, assuming the number of sampled subgraphs is $\csls$.
In \Cref{theo:g-ran-exp} in \Cref{app:proofs}, we further show that the expected number of subgraphs that a random policy must uniformly draw before identification is $T = \Theta(\csls \ln \csls)$. Notably, this number is significantly larger than the minimum number $\csls$ of subgraphs required by $\pi$. 
For instance, when $\csls=100$, a random policy would have to sample around $460$ subgraphs per graph. This is more than four times the number of subgraphs returned by $\pi$, making it clear that there exists a substantially more effective approach than random sampling to the problem.
\label{sec:motivation}

\vspace{-6pt}
\section{Method}
The following section describes the proposed framework: \ourmethod. We start with an overview of the method, before discussing in detail the specific components, namely the selection and the prediction networks, as well as the learning procedure used to differentiate through the discrete sampling process.

\subsection{Overview}
Our method is illustrated in \Cref{fig:framework} and described in \Cref{alg:main-alg}. It is composed of two main Subgraph GNNs: a subgraph selection network, denoted as $f$, that generates the subgraph selection policy $\pi$, and a downstream prediction network, denoted as $g$, that solves the learning task of interest. 
More concretely, given an input graph $G$, we initialize the bag of subgraphs with the original graph, $\mathcal{B}^0_G \coloneqq \ldblbrace G \rdblbrace$. Then, at each step $t+1$, the selection network $f$ takes the current bag of subgraphs $\mathcal{B}^{t}_G$ as input and outputs an un-normalized distribution $p_{t+1} \coloneqq f(\mathcal{B}^{t}_{G}) \in \mathbb{R}^{n}$, defined over the nodes of the original graph $G$.
Our algorithm samples from $p_{t+1}$ to select a node $v_{t+1}$ that will serve as the root of the next subgraph $S_{v_{t+1}}$.
This new subgraph is added to the bag, i.e.\ we define $\mathcal{B}^{t+1}_G = \mathcal{B}^{t}_G \cup \ldblbrace S_{v_{t+1}} \rdblbrace$.
After $T$ subgraph selections, where $T$ is a hyperparameter representing the number of subgraphs in the bag ($T+1$ considering the original graph), the final bag $\pi(G) \coloneqq \mathcal{B}^{T}_{G}$ is passed to the prediction network $g$, a Subgraph GNN designed to solve the downstream task at hand. 
Both $f$ and $g$ are trained jointly in an end-to-end manner, and we use the Gumbel-Softmax trick~\citep{jang2016categorical,maddison2017concrete} to backpropagate through the discrete sampling process.
In what follows, we elaborate on the exact implementation and details of each component of our approach. 


\begin{algorithm}[t] 
\caption{\ourmethod: Feedforward with learnable subgraph selection policy}
\begin{algorithmic}[1]\label{alg:main-alg}
\STATE Initialize bag with original graph: $\mathcal{B}^0_G \coloneqq \ldblbrace G \rdblbrace$ 

\FOR{$t=0,\dots, T-1$}
\STATE Feed current bag $\mathcal{B}^t_G$ to selection network $f$, obtain a distribution over the nodes of $G$, $p_{t+1}=f(\mathcal{B}^{t}_{G})$
\STATE Sample node $v_{t+1} \sim p_{t+1}$
\STATE Add subgraph to bag $\mathcal{B}^{t+1}_G=\mathcal{B}^{t}_G \cup \ldblbrace S_{v_{t+1}} \rdblbrace$
\ENDFOR
\STATE Feed final bag $\mathcal{B}^{T}_{G}$ to the downstream network $g$ and output result $g(\mathcal{B}^{T}_{G})$
\end{algorithmic}
\end{algorithm}

\subsection{Subgraph Selection Network}
\label{sec:selection}
The goal of $f$ is to output a distribution over the nodes of the original graph, determining which node should serve as the root of the next subgraph that will be added to the bag. 
As this network takes a bag of subgraphs as its input, 
it is natural to implement $f$ as a Subgraph GNN. Specifically, we employ the most efficient Subgraph GNN variant, namely DS-GNN \citep{cotta2021reconstruction, bevilacqua2022equivariant}, that performs message-passing operations on each subgraph independently using an MPNN, referred to as the subgraph encoder. Subsequently, it applies a pooling operation across the subgraphs in the bag, to obtain a single representation for each node in the input graph $G$.

Since we have both the original node features and the markings, the subgraph encoder in our DS-GNN architecture is implemented using the model proposed by \citet{dwivedi2021graph}, which decouples the two representations to improve learning. This encoder consists of two message-passing functions: $f_p$, responsible for updating the markings, and $f_h$, responsible for updating the original features. 
Formally, at the $t$-th iteration, we are provided with a bag of subgraphs $\mathcal{B}_{G}^{t}$ of size $t+1$. For each subgraph $s=0,\ldots,t$, we compute the following for $l=1,\dots,L_{1}$ where $L_{1}$ represents the total number of layers in the selection network:
\begin{subequations}
\begin{align}
    \label{eq:nodemarkingProp}
       p_{s}^{(l)} &= f_{p}^{(l)}
       (p_{s}^{(l-1)}, A_s), \ p_{s}^{(l)} \in \mathbb{R}^{n \times c}, \\
    \label{eq:nodefeaturesProp}
       h_{{s}}^{(l)} &= f_{{h}}^{(l)} (h_{s}^{(l-1)} \oplus p_{s}^{(l-1)}, A_s), \ h_{s}^{(l)} \in \mathbb{R}^{n\times c},  
\end{align}
\end{subequations}
with $f_p^{(l)}$ and $f_h^{(l)}$ the $l$-th message passing layers in $f_p$ and $f_h$, respectively.
At $l=0$, we initialize the marking as $p_{s}^{(0)} = \chi_{v_s}\in \mathbb{R}^{n }$ and the input node features as $h_{s}^{(0)} = X \in \mathbb{R}^{n \times c_{in}}$.

After the $L_{1}$-th layer, we combine the marking and node features to obtain the updated node features, as follows:
\begin{equation}\label{eq:hs}
h_{s} = p_{s}^{(L_{\text{1}})} + h_{s}^{(L_{\text{1}})}, \ h_{s} \in \mathbb{R}^{n \times c}.
\end{equation}
Note that in this way we obtain node features for every subgraph in the bag. In order to unify them into a single output per node, we pool the node features along the subgraph dimension. We then apply an MLP that reduces the channel dimension to one, to obtain the following un-normalized node probabilities from which we sample the next root node:
\begin{equation}
\label{eq:joint}
    p_{t+1} = h_{\mathcal{B}_{G}^{t}} =  \text{MLP}(\text{pool}(\ldblbrace h_{0},\ldots, h_{t} \rdblbrace)), \ p_{t+1} \in \mathbb{R}^{n}.
\end{equation}

\paragraph{Backpropagating through the sampling process.} A naive sampling strategy is not differentiable and therefore will prevent us from backpropagating gradients to update the weights of the selection network $f$. Therefore, we use the straight-through Gumbel-Softmax estimator \citep{jang2016categorical, maddison2017concrete} to sample from $p_{t+1}$, and obtain the one-hot encoding marking of the next subgraph to be added to the current bag of subgraphs:
\begin{equation}
    \label{eq:sampleGumbelSoftMax}
       \chi_{v_{t+1}} = \text{GumbelSoftmax}(p_{t+1}), \ \chi_{v_{t+1}} \in \{0, 1\}^{n}.
\end{equation}
We denote the subgraph associated with the node marking $\chi_{v_{t+1}}$ by $S_{v_{t+1}}$.
Therefore, the bag of subgraphs at the $(t+1)$-th iteration reads $\mathcal{B}^{t+1}_G=\mathcal{B}^{t}_G \cup \ldblbrace S_{v_{t+1}} \rdblbrace$.

\subsection{Downstream Prediction Network}
\label{sec:prediction}
Similarly to the selection network $f$, our prediction network $g$ is parameterized as a Subgraph GNN. It takes as input the final bag of subgraphs $\mathcal{B}^T_G$ that was sampled using $f$, and produces graph- or node-level predictions depending on the task of interest. Like the selection network, we employ a DS-GNN architecture and implement the subgraph encoder using two message-passing functions, $g_{p}$ and $g_{h}$, with layers $l=1,\ldots, L_2$ and updates identical to \Cref{eq:nodemarkingProp,eq:nodefeaturesProp}. Note that, while $f$ and $g$ are architecturally similar, they are two different networks with their own set of learnable weights. In the first layer, i.e., $l=0$, we set the initial marking as $\bar{p}_{s}^{(0)}= \chi_{v_s} \in \mathbb{R}^{n}$ and input node features as $\bar{h}_{s}^{(0)} = X \in \mathbb{R}^{n \times c_{in}}$, where we use the bar, i.e., $\bar{p}_{s}^{(l)}$, $\bar{h}_{s}^{(l)}$, to denote the representations in the downstream prediction model and distinguish them from the ones in the selection model.

After the $L_{2}$-th layer, we combine the marking and node features to obtain the updated node features $\bar{h}_{s} = \bar{p}_{s}^{(L_{2})} + \bar{h}_{s}^{(L_{2})}$ and pool the node features along the subgraph and potentially the node dimension to obtain a final node or graph representation. This representation is then further processed with an MLP to obtain the final prediction.

\section{Theoretical Analysis}
In this section, we study the theoretical capabilities of our framework, focusing on its ability to provably identify the isomorphism type of the $(n, \csls)$-CSL graphs introduced in \Cref{sec:motivation}. Due to space constraints, all proofs are relegated to \Cref{app:proofs}.
We start by showing that, differently than random selection approaches, \ourmethod\ can identify the isomorphism type of any $(n, \csls)$-CSL graph (\Cref{def:2csl}) when learning to select only $\csls$ subgraphs. Remarkably, this number is significantly smaller than the total $\csls \cdot n$ subgraphs in the full bag, as well as the expected number of subgraphs $\Theta( \csls \ln \csls)$ required by a random policy. 

\begin{restatable}[\ourmethod\ can identify $\mathcal{G}_{n, \csls}$]{theorem}{implement}\label{theo:g-implement}
Let $G$ be any graph in the family of non-isomorphic $(n, \csls)$-CSL graphs, namely $\mathcal{G}_{n, \csls}$  (\Cref{def:2csl}). There exist weights of $f$ and $g$ such that \ourmethod\ with $T=\csls$ can identify the isomorphism type of $G$ within the family.
\end{restatable}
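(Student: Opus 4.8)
The plan is to explicitly construct weights for the selection network $f$ and the prediction network $g$ so that, together, they realize the efficient policy $\pi$ whose existence is guaranteed by \Cref{theo:g-dis}. The key structural fact, already available from \Cref{theo:g-dis}, is that the isomorphism type of a $(n,\csls)$-CSL graph is identified precisely when the final bag $\mathcal{B}^T_G$ contains one marked subgraph rooted in each of the $\csls$ disconnected CSL (sub-)graphs. So it suffices to show that \ourmethod\ with $T=\csls$ can (i) build such a ``spread-out'' bag via the selection network, and (ii) read off the isomorphism type from such a bag via the prediction network.

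First I would handle the prediction network $g$. Given a bag with exactly one marking per CSL (sub-)graph, each subgraph $S_{v_j}$ restricted to its own connected component is a node-marked CSL($n,k_{i(j)}$); since node-marked CSL graphs are distinguishable by an MPNN (this is the classical fact underlying \citet{cotta2021reconstruction}, and can be established by a short argument showing that BFS/message-passing from the marked root recovers the skip length $k$), the DS-GNN subgraph encoder can compute, for each subgraph, a representation that encodes $k_{i(j)}$. Pooling over the bag then yields the multiset $\{k_1,\dots,k_\csls\}$, which is exactly the isomorphism type of $G$ within $\mathcal{G}_{n,\csls}$. I would make this precise by exhibiting message-passing functions $g_p, g_h$ whose fixed-depth iterates separate the finitely many skip lengths, then a pooling + MLP that aggregates into the type-multiset; I may invoke a universal-approximation-style argument over the finite set of relevant local structures to avoid explicit formulas.

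Next, for the selection network $f$: I need weights such that, at every step $t$, the sampled root lands in a CSL (sub-)graph not yet represented in $\mathcal{B}^t_G$. The idea is that $f$, being a DS-GNN, processes the current bag, and for each node it can detect (again via message passing within components, using the markings already present) whether that node's component already contains a marked root. Then the MLP in \Cref{eq:joint} assigns a large logit to nodes in ``unvisited'' components and a small logit to nodes in ``visited'' components, so that the Gumbel-Softmax sample (in the low-temperature / hard regime, which the straight-through estimator realizes at the forward pass) is supported on the unvisited components. Iterating $\csls$ times fills all components exactly once. I would argue this inductively: maintaining the invariant ``after step $t$, the bag has markings in $t$ distinct components.''

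The main obstacle I anticipate is the selection step's reliance on information propagating \emph{across} subgraphs in the bag: a DS-GNN processes each subgraph independently and only pools at the end, so detecting whether node $u$'s component is ``already marked in some subgraph of the bag'' requires that the pooled per-node representation carry this cross-subgraph signal. I would resolve this by noting that within a single subgraph $S_{v_s}$, message passing from the root $v_s$ reaches every node in $v_s$'s component; hence after pooling over the bag, a node $u$ receives a contribution indicating ``some subgraph in the bag has its root in $u$'s component'' iff $u$'s component is already represented. Making the DS-GNN layers and pooling faithfully compute this indicator — and checking it survives the concatenation/decoupling of markings and features in \Cref{eq:nodemarkingProp,eq:nodefeaturesProp} — is the delicate part; the rest is bookkeeping over the finitely many CSL skip lengths and a standard argument that Gumbel-Softmax with a sufficiently peaked logit vector samples the argmax.
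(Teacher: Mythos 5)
Your proposal is correct and takes essentially the same approach as the paper's proof: an induction over selection steps in which the DS-GNN selection network propagates the marking throughout the marked component, so that after pooling the nodes of already-covered CSL (sub-)graphs are distinguishable and can be assigned zero probability, reducing identification to the sufficiency direction of \Cref{theo:g-dis}. You are somewhat more explicit than the paper about the prediction network and about why the per-subgraph-then-pool structure of DS-GNN still carries the needed cross-subgraph "already marked" signal, but the core argument is identical.
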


The idea of the proof lies in the ability of the selection network within \ourmethod\ to implement a policy $\pi$ that returns $\ell$ subgraphs per graph where: (i) each subgraph is obtained by marking exactly one node and (ii) each marked node belongs to a different CSL (sub-)graph. Given that this policy serves as a sufficient condition for the identification of the isomorphism types (\Cref{theo:g-dis}), our proposed framework successfully inherits this ability. More specifically, the proof (in \Cref{app:proofs}) proceeds by showing that the selection network can always distinguish all the nodes belonging to already-marked CSL (sub-)graphs from the rest, and therefore can assign them a zero probability of being sampled, while maintaining a uniform probability on all the remaining nodes.  

The previous result further sets our method apart from the OSAN method proposed in \citet{qian2022ordered}. Indeed, as its selection network is parameterized as an MPNN, and given that $\mathcal{G}_{n, \csls}$ contains WL-indistinguishable graphs composed by WL-indistinguishable CSL (sub-)graphs, then OSAN cannot differentiate between all the nodes in the graph. Therefore, since all the subgraphs are selected simultaneously rather than sequentially, the resulting sampling process is the same as uniform sampling. Consequently, it cannot ensure that marked nodes belong to different CSL (sub-)graphs. Since this condition is not only sufficient but also necessary for the identification of the isomorphism type (\Cref{theo:g-dis}), OSAN effectively behaves like a random policy for indistinguishable graphs.

\begin{restatable}[\citet{qian2022ordered} cannot efficiently identify $\mathcal{G}_{n, \csls}$]{theorem}{osanimplement}\label{theo:osan-implement}
Let $G$ be any graph in the family of non-isomorphic $(n, \csls)$-CSL graphs, namely $\mathcal{G}_{n, \csls}$  (\Cref{def:2csl}).  Then, the probability that OSAN~\citep{qian2022ordered} with $T=\csls$ subgraphs can successfully identify the isomorphism type of $G$ is ${\csls!}/{\csls^\csls}$, which tends to 0 as $\csls$ increases.
\end{restatable}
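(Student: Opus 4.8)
The plan is to show that OSAN's one-shot selection procedure, when applied to any graph $G \in \mathcal{G}_{n,\csls}$, reduces to uniform sampling of $\csls$ subgraphs, and then to invoke \Cref{theo:g-ran} directly. The argument has two distinct parts: a structural/expressivity part (the selection MPNN cannot break the symmetry between the CSL sub-graphs) and a combinatorial part (uniform selection identifies the isomorphism type only with probability $\csls!/\csls^\csls$, which is exactly the content of \Cref{theo:g-ran} and tends to $0$ by Stirling).

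First I would recall precisely how OSAN~\citep{qian2022ordered} constructs its bag: it runs an MPNN on the input graph $G$ to produce node scores, and then samples $T=\csls$ subgraph roots \emph{in one shot} (via I-MLE over a $k$-subset distribution) rather than sequentially conditioning on previously chosen subgraphs. The key observation is that an MPNN's node representations are determined by the WL-colours of the nodes. By \Cref{theo:g-ind}, every $(n,\csls)$-CSL graph is WL-indistinguishable from every other in the family; moreover, since CSL graphs are themselves WL-indistinguishable \citep{murphy2019relational} and a $(n,\csls)$-CSL graph is a disjoint union of $\csls$ of them, WL refinement cannot distinguish nodes lying in different CSL (sub-)graphs — all nodes in $G$ receive the same stable colour. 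Hence the MPNN-produced scores are constant across all $\csls \cdot n$ nodes, and the induced distribution over $\csls$-subsets of roots is exactly uniform over all $\binom{\csls n}{\csls}$ subsets (or uniform over multisets, depending on the precise sampling convention — in either formalization the marginal over which CSL sub-graphs are "hit" is the balls-into-bins distribution).

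Next I would translate "identification of the isomorphism type" into the combinatorial event from \Cref{theo:g-dis}: by that proposition, the bag identifies the isomorphism type of $G$ within $\mathcal{G}_{n,\csls}$ if and only if it contains at least one marked node in each of the $\csls$ CSL sub-graphs. Under the uniform selection of $\csls$ roots established above, the probability that the $\csls$ chosen roots land in $\csls$ \emph{distinct} CSL sub-graphs is the classical "all boxes occupied when throwing $\csls$ balls into $\csls$ boxes" probability, namely $\csls!/\csls^\csls$ — this is exactly the computation already carried out in the proof of \Cref{theo:g-ran}, so I would cite it rather than redo it. Finally, $\csls!/\csls^\csls \to 0$ as $\csls \to \infty$ by Stirling's approximation ($\csls! \sim \sqrt{2\pi\csls}\,(\csls/e)^\csls$, so the ratio behaves like $\sqrt{2\pi\csls}\,e^{-\csls}$), which completes the statement.

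The main obstacle I anticipate is the first part: making airtight the claim that OSAN's selection distribution is \emph{exactly} uniform (not merely "cannot distinguish the sub-graphs"). One has to be careful that (i) the MPNN receives no input features that break symmetry (the CSL graphs here are unattributed, so $X$ is constant — this should be stated as part of the setup inherited from \Cref{sec:motivation}), (ii) the one-shot nature of OSAN genuinely prevents any conditioning that could differentiate the second choice from the first (this is where the contrast with \ourmethod's sequential construction in \Cref{theo:g-implement} is essential), and (iii) the I-MLE / $k$-subset sampling layer, when fed a constant score vector, indeed produces a permutation-invariant (hence uniform) distribution over $\csls$-subsets. Points (i) and (ii) are conceptual and follow from the definitions; point (iii) is a property of the sampling primitive that should be stated as a lemma or cited from \citep{niepert2021implicit}. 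Once these are in place, the combinatorial conclusion is immediate from \Cref{theo:g-dis,theo:g-ran}.
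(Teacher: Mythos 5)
Your proposal follows essentially the same route as the paper's proof: WL-indistinguishability of all nodes (via \Cref{theo:g-ind}) forces OSAN's one-shot MPNN-based selection to be uniform, identification is then equivalent to hitting all $\csls$ CSL (sub-)graphs by \Cref{theo:g-dis}, and the probability $\csls!/\csls^\csls$ is imported from \Cref{theo:g-ran}. Your extra care about the I-MLE sampling primitive and the with-/without-replacement convention (the exact constant $\csls!/\csls^\csls$ requires the $\csls$ roots to be drawn independently, which is what the paper also implicitly assumes) goes beyond what the paper writes but does not change the argument.
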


Up until this point, our focus has been on demonstrating that \ourmethod\ has the capability to implement the policy $\pi$ for identification within $\mathcal{G}_{n, \csls}$. However, it is important to note that $\pi$ is not the sole policy that \ourmethod\ can implement.  Indeed, \ourmethod\ can also learn subgraph selection policies that depend on the local graph structure, such as the policy returning all subgraphs with root nodes having degree greater than a specified threshold (see \Cref{theo:g-implement-degree} in \Cref{app:proofs}).
\label{sec:theory}

\vspace{-6pt}
\section{Experiments}
In this section, we empirically investigate the performance of \ourmethod. In particular, we seek to answer the following questions:
\begin{enumerate*}[label=(\arabic*)]
    \item Can our approach consistently outperform MPNNs, as demonstrated by full-bag Subgraph GNNs?
    \item Does \ourmethod\ achieve better performance than a random baseline that uniformly samples a subset of subgraphs at each training epoch?
    \item How does our method compare to the existing learnable selection strategy OSAN \citep{qian2022ordered}?
    \item Is our approach competitive to full-bag Subgraph GNNs on real-world datasets?
\end{enumerate*}

In the following, we present our main results and refer to \Cref{app:exp} for details.\footnote{Our code is available at \url{https://github.com/beabevi/policy-learn}} For each task, we include two baselines: \textsc{Random}, which represents the random subgraph selection, and \textsc{Full}, which corresponds to the full-bag approach. These baselines utilize the same downstream prediction network (appropriately tuned) as \ourmethod, with \textsc{Random} selecting random subsets of subgraphs 
\begin{wraptable}[15]{r}{0.5\textwidth}
\vspace{-8pt}
\centering
\caption{Test results on the ZINC molecular dataset under the 500k parameter budget.}
\footnotesize
\begin{tabular}{l c}
    \toprule
        Method & \textsc{ZINC (MAE $\downarrow$)} \\
        \midrule  
        \textsc{GCN}~\citep{kipf2016semi}        & 0.321$\pm$0.009\\
        \textsc{GIN}~\citep{xu2019how}        & 0.163$\pm$0.004\\
        \midrule
        \textsc{PNA}~\citep{corso2020pna} & 0.133$\pm$0.011 \\
        \textsc{GSN}~\citep{bouritsas2022improving}        & 0.101$\pm$0.010\\
        \textsc{CIN}~\citep{bodnar2021weisfeilerB} & 0.079$\pm$0.006 \\
        \midrule
        \textsc{OSAN}~\citep{qian2022ordered} $T=2$ & 0.177$\pm$0.016 \\
        \midrule
        \textsc{Full} & 0.087$\pm$0.003\\
        \textsc{Random} $T=2$& 0.136$\pm$0.005 \\
        \textsc{\ourmethod} $T=2$ & 0.120$\pm$0.003\\
        \textsc{Random}  $T=5$ & 0.113$\pm$0.006\\
        \textsc{\ourmethod} $T=5$ & 0.109$\pm$0.005 \\
        \bottomrule
\end{tabular}
\label{tab:zinc}
\end{wraptable}
equal in size to those used by \ourmethod\ (denoted by $T$), and \textsc{Full} using the entire bag.

\textbf{ZINC.} We experimented with the \textsc{ZINC-12k} molecular dataset~\citep{ZINCdataset, gomez2018auto,dwivedi2020benchmarking}, where as prescribed we maintain a 500k parameter budget. As can be seen from \Cref{tab:zinc}, \ourmethod\ significantly improves over OSAN, and surpasses the random baseline. Notably, OSAN performs worse than our random baseline due to differences in the implementation of the prediction network. As expected, the gap between \ourmethod\ and \textsc{Random} is larger when the number of subgraphs $T$ is smaller, where selecting the most informative subgraphs is more crucial. 

\begin{wraptable}[14]{r}{0.5\textwidth}
\vspace{5pt}
\centering
\footnotesize
    \caption{Results on the \textsc{Reddit-Binary} dataset demonstrate the efficacy on large graphs.}
    \label{tab:tud}%
    \begin{tabular}{l c}
        \toprule
        Method & \textsc{RDT-B  (ACC $\uparrow$)} \\
        
        \midrule   
        
        \textsc{GIN} \citep{xu2019how} & 92.4$\pm$2.5 \\
        \midrule
        \textsc{SIN}~\citep{bodnar2021weisfeilerA} & 92.2$\pm$1.0 \\
        \textsc{CIN}~\citep{bodnar2021weisfeilerB} & 92.4$\pm$2.1 \\
        \midrule
        \textsc{Full} & OOM\\
        \textsc{Random} $T=20$ &  92.6$\pm$1.5 \\
        \textsc{Random} $T=2$ &  92.4$\pm$1.0 \\
        \textsc{\ourmethod} $T=2$  & 93.0$\pm$0.9\\
        \bottomrule
    \end{tabular}
\end{wraptable}

\textbf{Reddit.} To showcase the capabilities of \ourmethod\ on large graphs, we experimented with the \textsc{Reddit-Binary} dataset~\citep{tuDatasets} where full-bag Subgraph GNNs cannot be otherwise applied due to the high average number of nodes, and consequently subgraphs, per graph ($429.63$). \ourmethod\ exhibits the best performance among all efficient baselines (\Cref{tab:tud}), while maintaining competitive inference runtimes (\Cref{tab:runtimes} in \Cref{app:exp}), opening up the applicability of Subgraph GNNs to unexplored avenues. \Cref{tab:tud-full} in \Cref{app:exp} further compares to random baselines coupled with different subgraph-based downstream models. 

\begin{table}[t]
\centering
\footnotesize
\caption{Test results on the OGB datasets. \ourmethod\ outperforms both the learnable selection approach OSAN and the random selection by a large margin, while approaching or surpassing the corresponding full-bag architecture. N/A indicates the result was not reported.}
\begin{tabular}{l | c c c c }
    \toprule
        \multirow{2}*{Method $\downarrow$ / Dataset $\rightarrow$} & \textsc{molesol} & \textsc{moltox21} & \textsc{molbace} &\textsc{molhiv} \\
        & \textsc{RMSE $\downarrow$} & \textsc{ROC-AUC $\uparrow$} &\textsc{ROC-AUC $\uparrow$} &\textsc{ROC-AUC $\uparrow$} \\
        \midrule  
        \textsc{GCN}~\citep{kipf2016semi}        &  1.114$\pm$0.036 & 75.29$\pm$0.69 & 79.15$\pm$1.44& 76.06$\pm$0.97 \\
        \textsc{GIN}~\citep{xu2019how}        & 1.173$\pm$0.057 & 74.91$\pm$0.51 & 72.97$\pm$4.00 & 75.58$\pm$1.40 \\
        \midrule
        \textsc{OSAN}~\citep{qian2022ordered} $T=10$ & 0.984$\pm$0.086 & N/A & 72.30$\pm$6.60 & N/A \\
        \midrule
        \textsc{Full} & 0.847$\pm$0.015 & 76.25$\pm$1.12 & 78.41$\pm$1.94&  76.54$\pm$1.37\\
        \textsc{Random} $T=2$ & 0.951$\pm$0.039 & 76.65$\pm$0.89 & 75.36$\pm$4.28 & 77.55$\pm$1.24\\
        \textsc{\ourmethod} $T=2$ & 0.877$\pm$0.029 & 77.47$\pm$0.82& 78.40$\pm$2.85& 79.13$\pm$0.60\\
        \textsc{Random} $T=5$ & 0.900$\pm$0.032 & 76.62$\pm$0.63 & 78.14$\pm$2.36 & 77.30$\pm$2.56\\
        \textsc{\ourmethod} $T=5$ & 0.883$\pm$0.032 & 77.36$\pm$0.60& 78.39$\pm$2.28 & 78.49$\pm$1.01\\
        \bottomrule
\end{tabular}
\label{tab:ogb}
\end{table}

\textbf{OGB.} We tested our framework on several datasets from the OGB benchmark collection~\citep{hu2020graph}. \Cref{tab:ogb} shows the performances of our method when compared to MPNNs and efficient subgraph selection policies. For OSAN, we report the best result across all node-based policies, which is obtained with a larger number of subgraphs than what we consider ($T=10$ compared to our $T \in \{2, 5\}$). Notably, \ourmethod\ improves over MPNNs while retaining similar asymptotic complexity, and consistently outperforms OSAN and \textsc{Random}.  The performance gap between \ourmethod\ and \textsc{Random} is more significant with $T=2$, particularly on \textsc{molesol} and \textsc{molbace}. Remarkably, \ourmethod\ approaches the full-bag baseline \textsc{Full} on \textsc{molesol} and \textsc{molbace}, and even outperforms it on \textsc{moltox21} and \textsc{molhiv}, achieving the highest score. These results can be attributed to the generalization challenges posed by the dataset splits, which are further confirmed by the observation that increasing the number of subgraphs does not improve results. \Cref{tab:ogb-full} in \Cref{app:exp} additionally reports the results of existing full-bag Subgraph GNNs.

\vspace{-6pt}
\section{Conclusions}
In this paper we introduced a novel framework, \ourmethod, for learning to select a small subset of the large set of possible subgraphs in order to reduce the computational overhead of Subgraph GNNs. We addressed the pivotal question of whether a small, carefully selected, subset of subgraphs can be used to identify the isomorphism type of graphs that would otherwise be WL-indistinguishable. We proved that, unlike random selections and previous work, our method can provably learn to select these critical subgraphs. Empirically, we demonstrated that \ourmethod\ consistently outperforms random selection strategies and approaches the performance of full-bag methods. Furthermore, it significantly surpasses prior work addressing the same problem, across all the datasets we considered. This underscores the effectiveness of our framework in focusing on the necessary subgraphs, mitigating computational complexity, and achieving competitive results. We believe our approach opens the door to broader applications of Subgraph GNNs.

\subsubsection*{Acknowledgments}
BR acknowledges support from the National Science Foundation (NSF) awards CAREER IIS-1943364 and CNS-2212160, an Amazon Research Award, AnalytiXIN, and the Wabash Heartland Innovation Network (WHIN). HM is the Robert J. Shillman Fellow, and is supported by the Israel Science Foundation through a personal grant (ISF 264/23) and an equipment grant (ISF 532/23). Any opinions, findings and conclusions or recommendations are those of the authors and do not necessarily reflect the views of the sponsors.

\bibliography{references}

\begin{thebibliography}{53}
\providecommand{\natexlab}[1]{#1}
\providecommand{\url}[1]{\texttt{#1}}
\expandafter\ifx\csname urlstyle\endcsname\relax
  \providecommand{\doi}[1]{doi: #1}\else
  \providecommand{\doi}{doi: \begingroup \urlstyle{rm}\Url}\fi

\bibitem[Abboud et~al.(2020)Abboud, Ceylan, Grohe, and
  Lukasiewicz]{abboud2020surprising}
Ralph Abboud, \.{I}smail~\.{I}lkan Ceylan, Martin Grohe, and Thomas
  Lukasiewicz.
\newblock The surprising power of graph neural networks with random node
  initialization.
\newblock In \emph{Proceedings of the Thirtieth International Joint Conference
  on Artificial Intelligence ({IJCAI})}, 2020.

\bibitem[Barcel{\'o} et~al.(2021)Barcel{\'o}, Geerts, Reutter, and
  Ryschkov]{barcelo2021graph}
Pablo Barcel{\'o}, Floris Geerts, Juan Reutter, and Maksimilian Ryschkov.
\newblock Graph neural networks with local graph parameters.
\newblock \emph{Advances in Neural Information Processing Systems},
  34:\penalty0 25280--25293, 2021.

\bibitem[Bevilacqua et~al.(2022)Bevilacqua, Frasca, Lim, Srinivasan, Cai,
  Balamurugan, Bronstein, and Maron]{bevilacqua2022equivariant}
Beatrice Bevilacqua, Fabrizio Frasca, Derek Lim, Balasubramaniam Srinivasan,
  Chen Cai, Gopinath Balamurugan, Michael~M Bronstein, and Haggai Maron.
\newblock Equivariant subgraph aggregation networks.
\newblock In \emph{International Conference on Learning Representations}, 2022.

\bibitem[Biewald(2020)]{wandb}
Lukas Biewald.
\newblock Experiment tracking with weights and biases, 2020.
\newblock Software available from wandb.com.

\bibitem[Blom et~al.(1993)Blom, Holst, and Sandell]{blom1993problems}
Gunnar Blom, Lars Holst, and Dennis Sandell.
\newblock \emph{Problems and Snapshots from the World of Probability}.
\newblock Springer Science \& Business Media, 1993.

\bibitem[Bodnar et~al.(2021{\natexlab{a}})Bodnar, Frasca, Otter, Wang, Li\`{o},
  Mont\'{u}far, and Bronstein]{bodnar2021weisfeilerB}
Cristian Bodnar, Fabrizio Frasca, Nina Otter, Yuguang Wang, Pietro Li\`{o},
  Guido~F Mont\'{u}far, and Michael Bronstein.
\newblock Weisfeiler and lehman go cellular: Cw networks.
\newblock In \emph{Advances in Neural Information Processing Systems},
  volume~34, 2021{\natexlab{a}}.

\bibitem[Bodnar et~al.(2021{\natexlab{b}})Bodnar, Frasca, Wang, Otter,
  Mont\'{u}far, Li\`{o}, and Bronstein]{bodnar2021weisfeilerA}
Cristian Bodnar, Fabrizio Frasca, Yuguang Wang, Nina Otter, Guido~F
  Mont\'{u}far, Pietro Li\`{o}, and Michael Bronstein.
\newblock Weisfeiler and lehman go topological: Message passing simplicial
  networks.
\newblock In \emph{International Conference on Machine Learning},
  2021{\natexlab{b}}.

\bibitem[Bouritsas et~al.(2022)Bouritsas, Frasca, Zafeiriou, and
  Bronstein]{bouritsas2022improving}
Giorgos Bouritsas, Fabrizio Frasca, Stefanos~P Zafeiriou, and Michael
  Bronstein.
\newblock Improving graph neural network expressivity via subgraph isomorphism
  counting.
\newblock \emph{IEEE Transactions on Pattern Analysis and Machine
  Intelligence}, 2022.

\bibitem[Chen et~al.(2019)Chen, Chen, Hsieh, Lee, Liao, Liao, Liu, Qiu, Sun,
  Tang, Zemel, and Zhang]{chen2019alchemy}
Guangyong Chen, Pengfei Chen, Chang-Yu Hsieh, Chee-Kong Lee, Benben Liao,
  Renjie Liao, Weiwen Liu, Jiezhong Qiu, Qiming Sun, Jie Tang, Richard Zemel,
  and Shengyu Zhang.
\newblock Alchemy: A quantum chemistry dataset for benchmarking ai models.
\newblock \emph{arXiv preprint arXiv:1906.09427}, 2019.

\bibitem[Chen et~al.(2020)Chen, Chen, Villar, and Bruna]{chen2020can}
Zhengdao Chen, Lei Chen, Soledad Villar, and Joan Bruna.
\newblock Can graph neural networks count substructures?
\newblock \emph{Advances in neural information processing systems},
  33:\penalty0 10383--10395, 2020.

\bibitem[Corso et~al.(2020)Corso, Cavalleri, Beaini, Li\`{o}, and
  Veli\v{c}kovi\'{c}]{corso2020pna}
Gabriele Corso, Luca Cavalleri, Dominique Beaini, Pietro Li\`{o}, and Petar
  Veli\v{c}kovi\'{c}.
\newblock Principal neighbourhood aggregation for graph nets.
\newblock In \emph{Advances in Neural Information Processing Systems},
  volume~33, 2020.

\bibitem[Cotta et~al.(2021)Cotta, Morris, and Ribeiro]{cotta2021reconstruction}
Leonardo Cotta, Christopher Morris, and Bruno Ribeiro.
\newblock Reconstruction for powerful graph representations.
\newblock In \emph{Advances in Neural Information Processing Systems},
  volume~34, 2021.

\bibitem[Dasoulas et~al.(2021)Dasoulas, Santos, Scaman, and
  Virmaux]{dasoulas2021coloring}
George Dasoulas, Ludovic~Dos Santos, Kevin Scaman, and Aladin Virmaux.
\newblock Coloring graph neural networks for node disambiguation.
\newblock In \emph{Proceedings of the Twenty-Ninth International Conference on
  International Joint Conferences on Artificial Intelligence}, pp.\
  2126--2132, 2021.

\bibitem[Dwivedi et~al.(2020)Dwivedi, Joshi, Laurent, Bengio, and
  Bresson]{dwivedi2020benchmarking}
Vijay~Prakash Dwivedi, Chaitanya~K Joshi, Thomas Laurent, Yoshua Bengio, and
  Xavier Bresson.
\newblock Benchmarking graph neural networks.
\newblock \emph{arXiv preprint arXiv:2003.00982}, 2020.

\bibitem[Dwivedi et~al.(2021)Dwivedi, Luu, Laurent, Bengio, and
  Bresson]{dwivedi2021graph}
Vijay~Prakash Dwivedi, Anh~Tuan Luu, Thomas Laurent, Yoshua Bengio, and Xavier
  Bresson.
\newblock Graph neural networks with learnable structural and positional
  representations.
\newblock In \emph{International Conference on Learning Representations}, 2021.

\bibitem[Fey \& Lenssen(2019)Fey and Lenssen]{fey2019fast}
Matthias Fey and Jan~Eric Lenssen.
\newblock Fast graph representation learning with pytorch geometric.
\newblock \emph{arXiv preprint arXiv:1903.02428}, 2019.

\bibitem[Frasca et~al.(2022)Frasca, Bevilacqua, Bronstein, and
  Maron]{frasca2022understanding}
Fabrizio Frasca, Beatrice Bevilacqua, Michael~M Bronstein, and Haggai Maron.
\newblock Understanding and extending subgraph gnns by rethinking their
  symmetries.
\newblock In \emph{Advances in Neural Information Processing Systems}, 2022.

\bibitem[Gómez-Bombarelli et~al.(2018)Gómez-Bombarelli, Wei, Duvenaud,
  Hernández-Lobato, Sánchez-Lengeling, Sheberla, Aguilera-Iparraguirre,
  Hirzel, Adams, and Aspuru-Guzik]{gomez2018auto}
Rafael Gómez-Bombarelli, Jennifer~N. Wei, David Duvenaud, José~Miguel
  Hernández-Lobato, Benjamín Sánchez-Lengeling, Dennis Sheberla, Jorge
  Aguilera-Iparraguirre, Timothy~D. Hirzel, Ryan~P. Adams, and Alán
  Aspuru-Guzik.
\newblock Automatic chemical design using a data-driven continuous
  representation of molecules.
\newblock \emph{ACS Central Science}, 4\penalty0 (2):\penalty0 268–276, Jan
  2018.

\bibitem[Hu et~al.(2020)Hu, Xiong, Qu, Yuan, C{\^o}t{\'e}, Liu, and
  Tang]{hu2020graph}
Shengding Hu, Zheng Xiong, Meng Qu, Xingdi Yuan, Marc-Alexandre C{\^o}t{\'e},
  Zhiyuan Liu, and Jian Tang.
\newblock Graph policy network for transferable active learning on graphs.
\newblock \emph{Advances in Neural Information Processing Systems},
  33:\penalty0 10174--10185, 2020.

\bibitem[Huang et~al.(2023)Huang, Peng, Ma, and Zhang]{huang2022boosting}
Yinan Huang, Xingang Peng, Jianzhu Ma, and Muhan Zhang.
\newblock Boosting the cycle counting power of graph neural networks with
  {I}$^{2}$-{GNN}s.
\newblock \emph{International Conference on Learning Representations}, 2023.

\bibitem[Jang et~al.(2016)Jang, Gu, and Poole]{jang2016categorical}
Eric Jang, Shixiang Gu, and Ben Poole.
\newblock Categorical reparameterization with gumbel-softmax.
\newblock In \emph{International Conference on Learning Representations}, 2016.

\bibitem[Kipf \& Welling(2017)Kipf and Welling]{kipf2016semi}
Thomas~N Kipf and Max Welling.
\newblock Semi-supervised classification with graph convolutional networks.
\newblock In \emph{International Conference on Learning Representations}, 2017.

\bibitem[Maddison et~al.(2017)Maddison, Mnih, and Teh]{maddison2017concrete}
Chris Maddison, Andriy Mnih, and Yee~Whye Teh.
\newblock The concrete distribution: A continuous relaxation of discrete random
  variables.
\newblock In \emph{Proceedings of the international conference on learning
  Representations}, 2017.

\bibitem[Maron et~al.(2019{\natexlab{a}})Maron, Ben-Hamu, Serviansky, and
  Lipman]{maron2019provably}
Haggai Maron, Heli Ben-Hamu, Hadar Serviansky, and Yaron Lipman.
\newblock Provably powerful graph networks.
\newblock In \emph{Advances in Neural Information Processing Systems},
  volume~32, 2019{\natexlab{a}}.

\bibitem[Maron et~al.(2019{\natexlab{b}})Maron, Ben-Hamu, Shamir, and
  Lipman]{maron2018invariant}
Haggai Maron, Heli Ben-Hamu, Nadav Shamir, and Yaron Lipman.
\newblock Invariant and equivariant graph networks.
\newblock In \emph{International Conference on Learning Representations},
  2019{\natexlab{b}}.

\bibitem[Martinkus et~al.(2023)Martinkus, Papp, Schesch, and
  Wattenhofer]{martinkus2023agent}
Karolis Martinkus, P{\'a}l~Andr{\'a}s Papp, Benedikt Schesch, and Roger
  Wattenhofer.
\newblock Agent-based graph neural networks.
\newblock In \emph{International Conference on Learning Representations
  (ICLR)}. arXiv, 2023.

\bibitem[Meirom et~al.(2021)Meirom, Maron, Mannor, and
  Chechik]{meirom2021controlling}
Eli Meirom, Haggai Maron, Shie Mannor, and Gal Chechik.
\newblock Controlling graph dynamics with reinforcement learning and graph
  neural networks.
\newblock In \emph{International Conference on Machine Learning}, pp.\
  7565--7577. PMLR, 2021.

\bibitem[Mitzenmacher \& Upfal(2017)Mitzenmacher and
  Upfal]{mitzenmacher2017probability}
Michael Mitzenmacher and Eli Upfal.
\newblock \emph{Probability and computing: Randomization and probabilistic
  techniques in algorithms and data analysis}.
\newblock Cambridge university press, 2017.

\bibitem[Morris et~al.(2019)Morris, Ritzert, Fey, Hamilton, Lenssen, Rattan,
  and Grohe]{morris2019weisfeiler}
Christopher Morris, Martin Ritzert, Matthias Fey, William~L Hamilton, Jan~Eric
  Lenssen, Gaurav Rattan, and Martin Grohe.
\newblock Weisfeiler and leman go neural: Higher-order graph neural networks.
\newblock In \emph{Proceedings of the AAAI conference on artificial
  intelligence}, volume~33, pp.\  4602--4609, 2019.

\bibitem[Morris et~al.(2020{\natexlab{a}})Morris, Kriege, Bause, Kersting,
  Mutzel, and Neumann]{tuDatasets}
Christopher Morris, Nils~M. Kriege, Franka Bause, Kristian Kersting, Petra
  Mutzel, and Marion Neumann.
\newblock Tudataset: A collection of benchmark datasets for learning with
  graphs.
\newblock In \emph{ICML 2020 Workshop on Graph Representation Learning and
  Beyond (GRL+ 2020)}, 2020{\natexlab{a}}.

\bibitem[Morris et~al.(2020{\natexlab{b}})Morris, Rattan, and
  Mutzel]{morris2020weisfeiler}
Christopher Morris, Gaurav Rattan, and Petra Mutzel.
\newblock Weisfeiler and leman go sparse: Towards scalable higher-order graph
  embeddings.
\newblock In \emph{Advances in Neural Information Processing Systems},
  volume~33, 2020{\natexlab{b}}.

\bibitem[Morris et~al.(2021)Morris, Lipman, Maron, Rieck, Kriege, Grohe, Fey,
  and Borgwardt]{morris2021weisfeiler}
Christopher Morris, Yaron Lipman, Haggai Maron, Bastian Rieck, Nils~M Kriege,
  Martin Grohe, Matthias Fey, and Karsten Borgwardt.
\newblock Weisfeiler and leman go machine learning: The story so far.
\newblock \emph{arXiv preprint arXiv:2112.09992}, 2021.

\bibitem[Murphy et~al.(2019)Murphy, Srinivasan, Rao, and
  Ribeiro]{murphy2019relational}
Ryan Murphy, Balasubramaniam Srinivasan, Vinayak Rao, and Bruno Ribeiro.
\newblock Relational pooling for graph representations.
\newblock In \emph{International Conference on Machine Learning}, pp.\
  4663--4673. PMLR, 2019.

\bibitem[Niepert et~al.(2021)Niepert, Minervini, and
  Franceschi]{niepert2021implicit}
Mathias Niepert, Pasquale Minervini, and Luca Franceschi.
\newblock Implicit mle: backpropagating through discrete exponential family
  distributions.
\newblock \emph{Advances in Neural Information Processing Systems},
  34:\penalty0 14567--14579, 2021.

\bibitem[Papp \& Wattenhofer(2022)Papp and Wattenhofer]{papp2022theoretical}
P{\'a}l~Andr{\'a}s Papp and Roger Wattenhofer.
\newblock A theoretical comparison of graph neural network extensions.
\newblock In \emph{International Conference on Machine Learning}, pp.\
  17323--17345. PMLR, 2022.

\bibitem[Papp et~al.(2021)Papp, Martinkus, Faber, and
  Wattenhofer]{papp2021dropgnn}
P{\'a}l~Andr{\'a}s Papp, Karolis Martinkus, Lukas Faber, and Roger Wattenhofer.
\newblock Dropgnn: Random dropouts increase the expressiveness of graph neural
  networks.
\newblock \emph{Advances in Neural Information Processing Systems},
  34:\penalty0 21997--22009, 2021.

\bibitem[Paszke et~al.(2019)Paszke, Gross, Massa, Lerer, Bradbury, Chanan,
  Killeen, Lin, Gimelshein, Antiga, Desmaison, Kopf, Yang, DeVito, Raison,
  Tejani, Chilamkurthy, Steiner, Fang, Bai, and Chintala]{pytorch}
Adam Paszke, Sam Gross, Francisco Massa, Adam Lerer, James Bradbury, Gregory
  Chanan, Trevor Killeen, Zeming Lin, Natalia Gimelshein, Luca Antiga, Alban
  Desmaison, Andreas Kopf, Edward Yang, Zachary DeVito, Martin Raison, Alykhan
  Tejani, Sasank Chilamkurthy, Benoit Steiner, Lu~Fang, Junjie Bai, and Soumith
  Chintala.
\newblock Pytorch: An imperative style, high-performance deep learning library.
\newblock In \emph{Advances in Neural Information Processing Systems},
  volume~32, 2019.

\bibitem[Puny et~al.(2023)Puny, Lim, Kiani, Maron, and
  Lipman]{puny2023equivariant}
Omri Puny, Derek Lim, Bobak Kiani, Haggai Maron, and Yaron Lipman.
\newblock Equivariant polynomials for graph neural networks.
\newblock In \emph{International Conference on Machine Learning}, pp.\
  28191--28222. PMLR, 2023.

\bibitem[Qian et~al.(2022)Qian, Rattan, Geerts, Morris, and
  Niepert]{qian2022ordered}
Chendi Qian, Gaurav Rattan, Floris Geerts, Christopher Morris, and Mathias
  Niepert.
\newblock Ordered subgraph aggregation networks.
\newblock In \emph{Advances in Neural Information Processing Systems},
  volume~35, 2022.

\bibitem[Qian et~al.(2023)Qian, Manolache, Ahmed, Zeng, den Broeck, Niepert,
  and Morris]{qian2023probabilistically}
Chendi Qian, Andrei Manolache, Kareem Ahmed, Zhe Zeng, Guy~Van den Broeck,
  Mathias Niepert, and Christopher Morris.
\newblock Probabilistically rewired message-passing neural networks, 2023.

\bibitem[Rong et~al.(2019)Rong, Huang, Xu, and Huang]{rong2019dropedge}
Yu~Rong, Wenbing Huang, Tingyang Xu, and Junzhou Huang.
\newblock Dropedge: Towards deep graph convolutional networks on node
  classification.
\newblock In \emph{International Conference on Learning Representations}, 2019.

\bibitem[Sato(2020)]{sato2020survey}
Ryoma Sato.
\newblock A survey on the expressive power of graph neural networks.
\newblock \emph{arXiv preprint arXiv:2003.04078}, 2020.

\bibitem[Sterling \& Irwin(2015)Sterling and Irwin]{ZINCdataset}
Teague Sterling and John~J. Irwin.
\newblock {ZINC 15} -- ligand discovery for everyone.
\newblock \emph{Journal of Chemical Information and Modeling}, 55\penalty0
  (11):\penalty0 2324--2337, 11 2015.
\newblock \doi{10.1021/acs.jcim.5b00559}.

\bibitem[Vignac et~al.(2020)Vignac, Loukas, and Frossard]{vignac2020building}
Clement Vignac, Andreas Loukas, and Pascal Frossard.
\newblock Building powerful and equivariant graph neural networks with
  structural message-passing.
\newblock \emph{Advances in neural information processing systems},
  33:\penalty0 14143--14155, 2020.

\bibitem[Xu et~al.(2019)Xu, Hu, Leskovec, and Jegelka]{xu2019how}
Keyulu Xu, Weihua Hu, Jure Leskovec, and Stefanie Jegelka.
\newblock How powerful are graph neural networks?
\newblock In \emph{International Conference on Learning Representations}, 2019.

\bibitem[Yan et~al.(2023)Yan, Zhou, Gao, Tang, and Zhang]{yan2023efficiently}
Zuoyu Yan, Junru Zhou, Liangcai Gao, Zhi Tang, and Muhan Zhang.
\newblock Efficiently counting substructures by subgraph gnns without running
  gnn on subgraphs.
\newblock \emph{arXiv preprint arXiv:2303.10576}, 2023.

\bibitem[Yehudai et~al.(2021)Yehudai, Fetaya, Meirom, Chechik, and
  Maron]{yehudai2021local}
Gilad Yehudai, Ethan Fetaya, Eli Meirom, Gal Chechik, and Haggai Maron.
\newblock From local structures to size generalization in graph neural
  networks.
\newblock In \emph{International Conference on Machine Learning}, pp.\
  11975--11986. PMLR, 2021.

\bibitem[You et~al.(2021)You, Gomes-Selman, Ying, and
  Leskovec]{you2021identity}
Jiaxuan You, Jonathan~M Gomes-Selman, Rex Ying, and Jure Leskovec.
\newblock Identity-aware graph neural networks.
\newblock In \emph{Proceedings of the AAAI Conference on Artificial
  Intelligence}, volume~35, pp.\  10737--10745, 2021.

\bibitem[Yun et~al.(2019)Yun, Sra, and Jadbabaie]{yun2019small}
Chulhee Yun, Suvrit Sra, and Ali Jadbabaie.
\newblock Small relu networks are powerful memorizers: a tight analysis of
  memorization capacity.
\newblock \emph{Advances in Neural Information Processing Systems}, 32, 2019.

\bibitem[Zhang et~al.(2023)Zhang, Feng, Du, He, and Wang]{zhang2023complete}
Bohang Zhang, Guhao Feng, Yiheng Du, Di~He, and Liwei Wang.
\newblock A complete expressiveness hierarchy for subgraph gnns via subgraph
  weisfeiler-lehman tests.
\newblock In \emph{International Conference on Machine Learning}, 2023.

\bibitem[Zhang \& Li(2021)Zhang and Li]{zhang2021nested}
Muhan Zhang and Pan Li.
\newblock Nested graph neural networks.
\newblock In \emph{Advances in Neural Information Processing Systems},
  volume~34, 2021.

\bibitem[Zhao et~al.(2022)Zhao, Jin, Akoglu, and Shah]{zhao2022from}
Lingxiao Zhao, Wei Jin, Leman Akoglu, and Neil Shah.
\newblock From stars to subgraphs: Uplifting any {GNN} with local structure
  awareness.
\newblock In \emph{International Conference on Learning Representations}, 2022.

\bibitem[Zhou et~al.(2023)Zhou, Wang, and Zhang]{zhou2023relational}
Cai Zhou, Xiyuan Wang, and Muhan Zhang.
\newblock From relational pooling to subgraph gnns: A universal framework for
  more expressive graph neural networks.
\newblock In \emph{International Conference on Machine Learning}, 2023.

\end{thebibliography}
\bibliographystyle{iclr2024_conference}

\newpage
\appendix
\section{Additional Related Work}\label{app:relatedwork}
Our approach is related not only to Subgraph GNNs and expressive GNN architecture, as discussed in the main paper, but also to other sampling strategies in the graph domain. In the context of active learning, \citet{hu2020graph} propose a reinforcement learning approach that learns which subset of nodes to label in order to reduce the annotation cost of training GNNs. For controlling the dynamics of a system using localized interventions, \citet{meirom2021controlling} employ a reinforcement learning agent that selects a subset of nodes at each step and attempts to change their state, with an objective that depends on the number of nodes in each state. More similar to our sampling strategy is the work by \citet{martinkus2023agent} which proposes a GNN architecture augmented with \emph{neural agents}, that traverse the graph and then collectively classify it. Notably, at each step, an agent chooses a neighboring node to transition to using the Gumbel-Softmax trick.

\revision{
\section{Illustration of WL on the Bag of Subgraphs}
In this section we explain in detail \Cref{fig:intro}.
For each original graph, we consider the application of the WL test to the subgraph obtained by marking the crossed node. More precisely, at time step 0, each non-marked node in the subgraph is assigned the same initial constant color, which differs from the initial color of the marked node. Without loss of generality, denote these colors as color 1 and color 0, respectively. Then, the WL test proceeds by refining the color of each node by aggregating the colors of its neighbors (including itself). More precisely, the new color of a node is obtained through a hash function taking as input the multiset of colors of the neighbors and of the node itself. It is easy to see that, at time step 1, the marked node has a unique color, which we call color 2, its neighbors all have the same colors, color 3, and all the remaining nodes have the same color (different from the marked node and its neighbors), color 4. The refinement is repeated until we reach a stable coloring. At this point, we simply collect the number of nodes having the same color in a histogram. For the upper graph in \Cref{fig:intro}, the histogram indicates that in the stable coloring, there are 4 blue nodes, 4 green nodes, 4 yellow nodes, and 1 orange node. Importantly, since the WL test represents a necessary condition for graph isomorphism, different histograms imply that the two graphs are not isomorphic, and thus the WL test distinguishes them. \Cref{fig:intro} shows that marking only one node is sufficient for distinguishing the two graphs, which are instead indistinguishable when no node is marked, as the WL test returns the same histogram for them (13 orange nodes).
}
\section{Proofs}\label{app:proofs}

This appendix includes the proofs for the theoretical results presented in \Cref{sec:motivation,sec:theory}. We start by formally proving that the family $\mathcal{G}_{n, \csls}$ of non-isomorphic $(n, \csls)$-CSL graphs, defined in \Cref{def:2csl}, contains WL-indistinguishable graphs.


\gindistinguishable*
\begin{proof}
    Let $G_1$, $G_2$ be any two $(n, \csls)$-CSL graphs in $\mathcal{G}_{n, \csls}$. We will prove that $G_1$, $G_2$ are WL-indistinguishable
    by simulating one WL round starting from a constant color initialization.
    \begin{itemize}
        \item[(Iter. 1)] The colors after the first iteration represent the node degrees. Since each node has degree $4$ (because it belongs to one CSL (sub-)graph), then all nodes will have the same color.
    \end{itemize}
    Since the colors after the first iterations are a finer refinement of the initialization colors, the algorithm stops and the graphs are not separated.
\end{proof}

Next, we show that there exists an efficient policy $\pi$, that can return as few subgraphs as $\csls$, providing strong guarantees. Indeed, $\pi$ represents a necessary and sufficient condition for identifying the isomorphism type of all graphs in $\mathcal{G}_{n, \csls}$, as we prove next.
\gdistinguishable*
\begin{proof} We will prove the two cases separately. 

    \textbf{Sufficiency.} Consider any graph $G \in \mathcal{G}_{n, \csls}$, and let $S_i$ be a subgraph returned by $\pi$ obtained by marking any node in the $i$-th CSL (sub-)graph of graph $G$. Note that $S_i$ is sufficient to identify the isomorphism type of the $i$-th CSL (sub-)graph, because marking any node is sufficient for disambiguation of any possible pair of CSL graphs~\citep[Theorem 2]{cotta2021reconstruction}. Consider the set $H$ of $\csls$ histograms, each obtained by marking a different CSL (sub-)graph. Note that if $\pi$ returns more than $\csls$ subgraphs, then identical histograms indicate that the corresponding subgraphs are obtained from the same CSL (sub-)graph, because $G$ is composed by non-isomorphic CSL (sub-)graphs. Thus, identical histograms can be simply discarded when constructing $H$. Then, the set $H$ is sufficient to identify the isomorphism type of $G$, as each element of $H$ identifies one of its CSL (sub-)graphs.

    \textbf{Necessity.} The proof proceeds by contrapositive. Consider a policy $\pi'$ and assume that there exists at least one CSL (sub-)graph such that none of the marked nodes belongs to it. Without loss of generality, assume that only one CSL (sub-)graphs is not covered, meaning no marked nodes belong to it. Then, $\pi'$ cannot identify the isomorphism type of $G$, because it is unaware of the isomorphism type of the CSL (sub-)graph that is not covered. More precisely, consider a graph $G' \in \mathcal{G}_{n, \csls}$ such that: (1) $G'$ is non-isomorphic to $G$ and (2) $G'$ differ from $G$ only in one CSL (sub-)graph, which is the one not covered by $\pi'$. Then, $\pi'$ cannot identify whether the isomorphism type of its input is the one of $G$ or the one of $G'$.
\end{proof}

The above theorem implies that a random selection baselines might fail to select the subgraphs required for the identification, as it might fail to draw at least one subgraph from each CSL (sub-)graph. In the following we prove that the implications are even more severe as $\csls$ grows.
\grandom*
\begin{proof}
    From \Cref{theo:g-dis}, each of the $\csls$ subgraphs must be generated by marking a node belonging to a different CSL (sub-)graph for the identification of the isomorphism type (necessary condition). Recall that the total number of CSL (sub-)graphs is $\csls$, and each CSL (sub-)graph has exactly $n$ nodes. Thus, the probability of choosing $\csls$ nodes from different CSL (sub-)graphs is $\csls!/ \csls^\csls$. Finally, since $\lim_{\csls \to \infty} \csls!/ \csls^\csls = 0$, the probability goes to zero as $\csls$ grows.
\end{proof}

Importantly, we derive the expected number of subgraphs that a random policy has to draw before identification.
\begin{lemma}\label{theo:g-ran-exp}
Let $G$ be any graph in the family of non-isomorphic $(n, \csls)$-CSL graphs, namely $\mathcal{G}_{n, \csls}$ (\Cref{def:2csl}). The expected number of subgraphs that a random policy must uniformly draw before identification is $T = \Theta(\csls \ln \csls)$.
\end{lemma}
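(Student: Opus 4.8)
The plan is to recognize this as a coupon collector problem. By \Cref{theo:g-dis}, a random policy identifies the isomorphism type of $G$ precisely when its bag of uniformly drawn subgraphs contains at least one node from each of the $\csls$ CSL (sub-)graphs. Since each CSL (sub-)graph has exactly $n$ nodes out of $\csls \cdot n$ total, and subgraphs are drawn uniformly, the probability that any given draw ``hits'' a particular CSL (sub-)graph is $1/\csls$, independently across draws. Thus the number $T$ of draws needed until all $\csls$ CSL (sub-)graphs have been hit is exactly the classical coupon collector random variable with $\csls$ coupon types.

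The first step is to make this reduction precise: I would note that whether a draw picks a node \emph{within} a given CSL (sub-)graph is all that matters for the identification condition, so we may collapse each CSL (sub-)graph to a single ``type'' and view each draw as sampling a type uniformly at random from $\{1, \dots, \csls\}$. The second step is to invoke (or briefly re-derive) the standard coupon collector bound: writing $T = \sum_{i=1}^{\csls} T_i$ where $T_i$ is the number of additional draws needed to see the $i$-th new type after $i-1$ have been seen, each $T_i$ is geometric with success probability $(\csls - i + 1)/\csls$, so $\E[T] = \sum_{i=1}^{\csls} \frac{\csls}{\csls - i + 1} = \csls \sum_{j=1}^{\csls} \frac{1}{j} = \csls H_\csls$, where $H_\csls$ is the $\csls$-th harmonic number. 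The final step is to use $H_\csls = \ln \csls + \Theta(1)$, giving $\E[T] = \csls \ln \csls + \Theta(\csls) = \Theta(\csls \ln \csls)$, as claimed.

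There is essentially no main obstacle here — the only thing to be careful about is the precise statement being proved (expected number of draws ``before identification''), and confirming that the $(n, \csls)$-CSL structure genuinely reduces to coupon collector with exactly $\csls$ equiprobable types, which follows immediately from the uniform-sampling assumption and the fact that all CSL (sub-)graphs have the same number of nodes $n$. If a high-probability statement were wanted rather than an expectation, one would additionally invoke the standard concentration of the coupon collector time around $\csls \ln \csls$, but for the $\Theta(\csls \ln \csls)$ bound on the expectation the harmonic-number computation suffices.
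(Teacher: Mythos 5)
Your proposal is correct and follows essentially the same route as the paper's proof: both recognize the problem as an instance of the coupon collector problem, decompose the waiting time into geometric random variables with success probabilities $(\csls - i + 1)/\csls$, apply linearity of expectation to obtain $\csls H_\csls$, and conclude via the asymptotics of the harmonic number. Your additional remark making the reduction to $\csls$ equiprobable types explicit is a nice touch but does not change the argument.
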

\begin{proof}
    Note that we have an instance of the coupon collector problem. The proof follows the steps in~\citet{mitzenmacher2017probability, blom1993problems}. Let $t_i$ be the time to collect (any node from) the $i$-th CSL (sub-)graphs, after $i - 1$ CSL (sub-)graphs have already been collected by drawing at least one of their nodes for marking. Let $p_i$ be probability of collecting a \emph{new} CSL (sub-)graph (one from which nodes where not drawn before). Then,
    \begin{align*}
        p_i = \frac{\csls - (i-1)}{\csls} = \frac{\csls - i + 1}{\csls}.
    \end{align*}
    Therefore, $t_i$ has geometric distribution with expectation $ \mathbb{E}[t_i] = \frac{1}{p_i} = \frac{\csls}{\csls - i + 1}$.
    By linearity of expectation, then the number of draws needed to collect all CSL (sub-)graphs is:
    \begin{align*}
        \mathbb{E}[t_1 + t_2 + \dots + t_\csls] =& \mathbb{E}[t_1] + \mathbb{E}[t_2] \ldots + \mathbb{E}[t_\csls] \\
        =& \frac{1}{p_1} + \frac{1}{p_2}  + \ldots + \frac{1}{p_\csls}\\
        =& \frac{\csls}{\csls} + \frac{\csls}{\csls - 1} + \ldots + \frac{\csls}{1} \\
        =& \csls \Big( \frac{1}{1} + \frac{1}{2} + \ldots + \frac{1}{\csls} \Big) \\
        =& \csls H_\csls 
    \end{align*}
    where $H_\csls$ is the $\csls$ harmonic number, which asymptotically grows as $\Theta(\ln \csls)$. Thus, we obtain the expectation $\Theta(\csls \ln \csls)$.
\end{proof}

Our next theorem shows that \ourmethod\ does not suffer from the shortcomings of the random selection. Indeed, \ourmethod\ can provably implement the $\pi$ in \Cref{theo:g-dis}, which in turn implies that it can learn to identify the isomorphism type of any graph in $\mathcal{G}_{n, \csls}$.
\implement*
\begin{proof}
    We will show that \ourmethod\ can implement the subgraph selection policy $\pi$ which returns $\csls$ subgraphs such that: (i) each subgraph is obtained by marking exactly one node and (ii) each marked node belongs to a different CSL (sub-)graph. Since $\pi$ is sufficient for identification (\Cref{theo:g-dis}), then \ourmethod\ can provably identify the isomorphism type of $G$.

    Recall that we initialize the bag of subgraphs as $\mathcal{B}^0_G := \ldblbrace G \rdblbrace$. At step $t=0$, since all nodes are WL-indistinguishable (see proof of \Cref{theo:g-ind}), then $p_{1}=f(\mathcal{B}^{0}_{G})$ is a uniform distribution. Thus, $v_{1} \sim p_{1}$ is randomly sampled from the $\csls \cdot n$ possible nodes, and the bag is updated as $\mathcal{B}^{1}_G=\mathcal{B}^{0}_G \cup \ldblbrace S_{v_{1}} \rdblbrace = \ldblbrace G, S_{v_{1}} \rdblbrace$.
    At step $t=1$, we obtain a distribution over the nodes as $p_{2}=f(\mathcal{B}^{1}_{G})$. Assuming $f$ has enough layers (it is sufficient\footnote{\revision{But not necessary, as for example the bound can be tightened to $n/2$ by considering the circular structure of the connected components, and even further by taking into account the skip connections.}} that the total number of layers $L_1$ is $n$), then all nodes belonging to the CSL (sub-)graph of node $v_1$ will have a different color than the remaining $(\csls - 1)\cdot n$ nodes. Therefore the MLP in \Cref{eq:joint} can map them to have zero probability. On the contrary, the remaining $(\csls - 1)\cdot n$ nodes will all have the same color, and the MLP will map them to have the same probability. Thus $p_{2}$ will be uniform over all the nodes in the CSL (sub-)graphs other than the one containing node $v_1$. Therefore, $v_{2} \sim p_{2}$ is randomly sampled from a CSL (sub-)graph different than one containing $v_1$. By repeating the argument, we have that at each iteration \ourmethod\ samples one node from a CSL (sub-)graph that has not yet been selected.
    Thus, at $t=\ell$ the bag of subgraphs $\mathcal{B}^t_G$ will contain all the subgraphs that are sufficient for identification.
\end{proof}

On the contrary, since OSAN~\citep{qian2022ordered} uses an MPNN as a selection network and samples all subgraphs at once, then it cannot perform better than randomly selecting subgraphs for the graphs in $\mathcal{G}_{n, \csls}$.
\osanimplement*
\begin{proof}
    Recall that the selection network in OSAN is parameterized as an MPNN taking as input the original graph and returning as output $\csls$ probability distributions over the $\csls \cdot n$ nodes, one for each of the $\csls$ subgraphs to sample. Since all nodes in $G$ are WL-indistinguishable (see proof of \Cref{theo:g-ind}), then each of the $\csls$ probability distribution is uniform over the $\csls \cdot n$ nodes. Therefore, each node is sampled uniformly at random. Since the sampled nodes must belong to different CSL (sub-)graphs for identification (\Cref{theo:g-dis}), and because the probability of sampling $\csls$ nodes each belonging to a different CSL (sub-)graph from the uniform distribution is $\csls!/ \csls^\csls$ (\Cref{theo:g-ran}), then the probability that OSAN can identify the isomorphism type of $G$ is $\csls!/ \csls^\csls$.
\end{proof}

Finally, we prove that \ourmethod\ can return all and only subgraphs having as root a node with degree greater than a predefined number.
\begin{restatable}[\ourmethod\ can implement degree-aware policies]{proposition}{implementdegree}\label{theo:g-implement-degree}
Consider a policy $\pi$ that returns all $m$ node-marked subgraphs whose root nodes have degree greater than a predefined $d$, with $d \geq 1$. Then, \ourmethod\ can implement $\pi$ with $T=m$.
\end{restatable}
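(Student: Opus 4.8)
The plan is to exhibit weights for the selection network $f$ that make \ourmethod\ pick, one at a time, exactly the $m$ node-marked subgraphs whose roots have degree $>d$; the weights of the prediction network $g$ play no role here, since the claim concerns only the bag $\pi(G)$ that is produced. The construction closely parallels the proof of \Cref{theo:g-implement}, with the predicate ``belongs to an already-marked CSL (sub-)graph'' replaced by ``has degree $\le d$, or has already been selected as a root''. The heart of the argument is to show that at every step $t$ the selection network can output an un-normalized distribution $p_{t+1}$ that is uniform over the not-yet-selected nodes of degree $>d$ and assigns probability zero to all other nodes; an immediate induction then gives that $\mathcal{B}^t_G$ consists of $G$ together with $t$ distinct degree-$>d$-rooted subgraphs, so that $\mathcal{B}^m_G = \pi(G)$ when $T=m$.

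\textbf{Constructing $f$.} First I would take the marking branch $f_p$ to be the identity in every layer (ignoring $A_s$), so that $p_s^{(L_1)} = \chi_{v_s}$ for a subgraph $S_{v_s}$, and $p^{(L_1)} = \mathbf{0}$ for the original graph (whose marking is the zero vector). Second, I would use a single message-passing layer of the feature branch $f_h$ to write the degree into a dedicated channel: taking the message function to be the constant $1$ and the aggregation to be a sum gives $\sum_{u\sim v} 1 = \deg(v)$ at node $v$, the same value in every subgraph because node marking does not alter connectivity, and all remaining layers of $f_h$ act as the identity on that channel, so any $L_1 \ge 2$ suffices. After combining, $h_s$ holds in two separate channels the one-hot $\chi_{v_s}$ and the degree vector; mean-pooling over the $t+1$ elements of $\mathcal{B}^t_G$ then yields, per node $v$, the value $\deg(v)$ in one channel and a value that is exactly $0$ iff $v$ has not yet been a root (and strictly positive otherwise, since $G$ contributes $0$ and the $t$ markings are distinct one-hots). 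Finally, since degrees and $d$ are integers, a ReLU-MLP applied node-wise can compute the indicator $b(v)$ of the ``bad'' event $\{\deg(v)\le d\}\cup\{v\text{ already selected}\}$ and output the logit $-M\,b(v)$; letting $M\to\infty$ (equivalently, masking out the bad logits, exactly as in the proof of \Cref{theo:g-implement}) makes the sampling distribution uniform over the surviving degree-$>d$ nodes.

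\textbf{Conclusion and main difficulty.} With this $f$ the induction is immediate: at $t=0$ the bag is $\{G\}$, the marking channel is all zero, and $p_1$ is uniform over the $m$ degree-$>d$ nodes; each subsequent step adds a fresh degree-$>d$ root, and since there are exactly $m$ of them, $\mathcal{B}^m_G$ coincides with $\pi(G)$ for $T=m$. I do not expect a genuine obstacle: the statement reuses the machinery already developed for \Cref{theo:g-implement}, and the only points needing a little care are (i) carrying the degree through all $L_1$ layers (handled by the identity layers above) and (ii) distinguishing ``already selected'' (a small but strictly positive pooled value) from ``not selected'' (exactly zero), which is why mean-pooling together with an internal rescaling in the MLP is the convenient choice.
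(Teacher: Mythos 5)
Your construction is correct and follows essentially the same route as the paper's proof: an explicit selection network that computes a per-node indicator of $\deg(v)>d$, uses the markings accumulated in the bag to exclude already-selected roots, places uniform mass on the survivors, and concludes by induction over the $m$ steps. The only substantive difference is that the paper combines the degree and ``already-a-root'' indicators inside each subgraph and aggregates with $\min$-pooling, so the pooled value is already a clean $\{0,1\}$ flag independent of $t$, whereas your mean-pooling leaves the already-selected signal at $1/(t+1)$ and hence needs the rescaling you flag --- which does work, since $1/(t+1)\ge 1/m$ is bounded away from $0$ and a fixed ReLU map can therefore threshold it uniformly over all iterations.
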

\begin{proof}
    We will prove that we can implement degree-aware policies by showing that it exists a set of weights such that \ourmethod\ gives a uniform probability to all nodes with degree $d$ that have not been already selected. Since the selection network is iteratively applied for $T=m$ times, then \ourmethod\ would have selected all $m$ subgraphs after the last iteration.

    Recall that we initialize the bag of subgraphs as $\mathcal{B}^0_G := \ldblbrace G \rdblbrace$, and, at any step, we sample a node and add the corresponding subgraph to the bag.
    In the following, we drop the dependency on both the graph and the step in the notation, and call the current bag simply by $\mathcal{B}$.

    Since we are only interested in sampling subgraphs having root nodes with degree $d$, we do not assume the presence of any additional initial node feature, and we directly work with a DS-GNN architecture with GraphConv encoders~\citep{morris2019weisfeiler}.\footnote{The extension to the presence of additional node features, as well as the usage of the separate propagation of them as in \citet{dwivedi2021graph} is straightforward, and can be done by considering those additional weights as zeroed out.} The representation for node $v$ in subgraph $i$ given by the $(l+1)$-th layer of our selection network can be written as:
    \begin{equation}\label{eq:dsgnn_morris_layer}
        h_{v,i}^{(l+1)} = \sigma \big( W_{1}^{(l)} h_{v,i}^{(l)} + W_{2}^{(l)} \sum_{u \sim_i v} h_{u, i}^{(l)} + b^{(l)} \big),
    \end{equation}
    where $h_{v,i}^{(0)} := \begin{psmallmatrix} 1 \\ \text{id}_{v,i} \end{psmallmatrix}$ with $\text{id}_{v,i} = 0$ if $v$ is not the root of subgraph $i$, and 
    $\text{id}_{v,i} = 1$ if $v$ is the root of subgraph $i$.
    Recall that, given the representations for each node in each subgraph in the bag, we then obtain a distribution over the nodes in the original graph by simply pooling the final node representations $h_{v,i}^{(L)}$ at the final layer $L$, across all the subgraphs, i.e., the unnormalized node distribution is obtained as:
    \begin{align}
         h_{v} = \text{pool}_i(h_{v,i}^{(L)}).
    \end{align}

    \noindent
    We consider a selection Subgraph GNN with 4 layers (i.e., $L=4$), and ReLU non-linearities.
    Each layer will have two channels, one used to compute an indicator variable storing whether a node has degree greater than $d$, 
    and the other one propagating whether a node is the root of the subgraph.
    In the following we describe each layer in detail.
    
    \vspace{5pt}
    \textbf{First layer.}
    We set
    $W_{1}^{(0)} = \begin{psmallmatrix} 0 & 0 \\ 0 & 1 \end{psmallmatrix}$,
    $W_{2}^{(0)} = \begin{psmallmatrix} 1 & 0 \\ 0 & 0 \end{psmallmatrix} $, $b^{(0)} = \begin{psmallmatrix} -d \\ 0 \end{psmallmatrix}$, then for node $v$ we have, 
    \begin{align*}
        h_{v,i}^{(1)}  := \begin{pmatrix} h_{v,i}^{(1), 0} \\ h_{v,i}^{(1), 1} \end{pmatrix} &= \sigma \left(
            \begin{pmatrix} 0 & 0 \\ 0 & 1 \end{pmatrix} \begin{pmatrix} 1 \\ \text{id}_{v,i} \end{pmatrix} + 
                    \begin{pmatrix} 1 & 0 \\ 0 & 0 \end{pmatrix} \sum_{u \sim_i v}  \begin{pmatrix} 1 \\ \text{id}_{u,i} \end{pmatrix} + 
                   \begin{pmatrix} -d\\ 0 \end{pmatrix} \right) \\
        &= \sigma \left( \begin{pmatrix} d_i(v) - d \\ \text{id}_{v,i} \end{pmatrix} \right)
    \end{align*}
    where $d_i(v)$ is the degree of $v$ in subgraph $i$, and since we are using a node marked policy, it is equal to the degree of $v$ in $G$, namely $d(v)$.
    Note that in this way the zero-th entry of $h_{v,i}^{(1)}$, namely $h_{v,i}^{(1), 0}$, is greater than 0 if and only if $d(v) > d$, and 0 otherwise.

    \vspace{5pt}
    \textbf{Second layer.}
    We set
    $W_{1}^{(1)} = \begin{psmallmatrix} -1 & 0 \\ 0 & 1 \end{psmallmatrix}$,
    $W_{2}^{(1)} = \begin{psmallmatrix} 0 & 0 \\ 0 & 0 \end{psmallmatrix}$, $b^{(1)} = \begin{psmallmatrix} 1 \\ 0 \end{psmallmatrix}$, then for node $v$ we have, 
    \begin{align*}
        h_{v,i}^{(2)} := \begin{pmatrix} h_{v,i}^{(2), 0} \\ h_{v,i}^{(2), 1} \end{pmatrix} &= \sigma \left(
            \begin{pmatrix} -1 & 0 \\ 0 & 1 \end{pmatrix} h_{v,i}^{(1)} + 
                    \begin{pmatrix} 0 & 0 \\ 0 & 0 \end{pmatrix} \sum_{u \sim_i v}  h_{u,i}^{(1)} + 
                   \begin{pmatrix} 1 \\ 0 \end{pmatrix} \right) \\
        &= \sigma \left( \begin{pmatrix} -h_{v,i}^{(1), 0} + 1 \\ h_{v,i}^{(1), 1} \end{pmatrix} \right) \\
        &= \sigma \left( \begin{pmatrix} -h_{v,i}^{(1), 0} + 1 \\ \text{id}_{v,i} \end{pmatrix} \right)
    \end{align*}
    Note that in this way the zero-th entry of $h_{v,i}^{(2)}$, namely $h_{v,i}^{(2), 0}$, is 0 if $d(v) > d$, and it is equal to 1 if $d(v) \leq d$.

    \vspace{5pt}
    \textbf{Third layer.}
    We set
    $W_{1}^{(2)} = \begin{psmallmatrix} -1 & 0 \\ 0 & 1 \end{psmallmatrix}$,
    $W_{2}^{(2)} = \begin{psmallmatrix} 0 & 0 \\ 0 & 0 \end{psmallmatrix}$, $b^{(2)} = \begin{psmallmatrix} 1 \\ 0 \end{psmallmatrix}$, then for node $v$ we have, 
    \begin{align*}
        h_{v,i}^{(3)} :=\begin{pmatrix} h_{v,i}^{(3), 0} \\ h_{v,i}^{(3), 1} \end{pmatrix} &= \sigma \left(
            \begin{pmatrix} -1 & 0 \\ 0 & 1 \end{pmatrix} h_{v,i}^{(2)} + 
                    \begin{pmatrix} 0 & 0 \\ 0 & 0 \end{pmatrix} \sum_{u \sim_i v}  h_{u,i}^{(2)} + 
                   \begin{pmatrix} 1 \\ 0 \end{pmatrix} \right) \\
        &= \sigma  \left( \begin{pmatrix} -h_{v,i}^{(2), 0} + 1 \\ h_{v,i}^{(2), 1} \end{pmatrix} \right) \\
        &= \sigma \left( \begin{pmatrix} -h_{v,i}^{(2), 0} + 1 \\ \text{id}_{v,i} \end{pmatrix} \right)
    \end{align*}
    Note that in this way the zero-th entry of $h_{v,i}^{(3)}$, namely $h_{v,i}^{(3), 0}$, is 0 if $d(v) \leq d$, and it is equal to 1 if $d(v) > d$.
    This means that at this point, $h_{v,i}^{(3), 0}$ contains the binary information indicating whether node $v$ has degree greater than $d$. 
    Therefore, we can use this entry to construct our distributions over nodes, ensuring we sample only nodes that have degree greater than $d$.
    However, we want to avoid re-sampling nodes that have already been sampled. We will show how this can be done by first using one additional layer that makes use
    of $\text{id}_{v,i}$, $\forall v \in G$, and $\forall i \in \mathcal{B}$, and then relying on our pooling function.
    
    \vspace{5pt}
    \textbf{Fourth layer.}
    We set
    $W_{1}^{(3)} = \begin{psmallmatrix} 1 & -1 \end{psmallmatrix}$,
    $W_{2}^{(3)} = \begin{psmallmatrix} 0 & 0 \end{psmallmatrix} $, $b^{(3)} = 0 $, then for node $v$ we have, 
    \begin{align*}
        h_{v,i}^{(4)} &= \sigma (
            \begin{pmatrix} 1 & -1 \end{pmatrix} h_{v,i}^{(3)} + 
                    \begin{pmatrix} 0 & 0 \end{pmatrix} \sum_{u \sim_i v}  h_{u,i}^{(3)} + 
                   0  )\\
        &= \sigma ( h_{v,i}^{(3), 0} - h_{v,i}^{(3), 1} ) \\
        &= \sigma ( h_{v,i}^{(3), 0} - \text{id}_{v,i} )
    \end{align*}
    Note that in this way $h_{v,i}^{(4)}$ is 1 if and only if $d(v) > d$ \emph{and} $\text{id}_{v,i} = 0$. 
    Therefore, if we consider all the subgraphs in the current bag, if $v$ is the root of one subgraph in the bag,
    then there exist $i \in \mathcal{B}$ such that $h_{v,i}^{(4)} = 0$.
    In other words, if $\forall i \in \mathcal{B}$, we have $h_{v,i}^{(4)} = 1$, then it means that $d(v) > d$ and $v$ has not been already selected.
    On the contrary, if $\exists i \in \mathcal{B}$, such that $h_{v,i}^{(4)} = 0$, then it means that either $d(v) \leq d$ or $v$ has been already selected as root.
    Thus, we can rely on our pooling operation to create the node distribution.
    
    \vspace{5pt}
    \textbf{Pooling layer.}
    We set $ \text{pool}_i = \min_i$, and therefore
    \begin{align}
         h_{v} = \min_{i \in \mathcal{B}}(h_{v,i}^{(4)}).
    \end{align}
    Note that if $h_{v} = 0$, then either $v$ has degree less than or equal to $d$ or it has been already selected as a root node (i.e., its corresponding subgraph is in $\mathcal{B}$).
    If instead $h_{v} = 1$, then $v$ has degree greater than $d$ and has not been selected.
    Thus, $h_v$ represents the unnormalised probability distribution over the nodes.
\end{proof}
\revision{
\section{Additional Expressive Power Analysis}\label{app:extended-theory}
In this section we extend the expressive power analysis reported in the main paper.
We start by thoroughly comparing the expressive power of \ourmethod\ and OSAN~\citep{qian2022ordered}. While we have seen in \Cref{theo:g-implement,theo:osan-implement} that there exist families of graphs that \ourmethod\ can efficiently identify while OSAN cannot, we show here that the contrary is not true: there are no families of graphs that OSAN can identify and \ourmethod\ cannot. This follows from the fact that on graphs of a fixed size $n$, for any instantiation of OSAN there exists a set of weights for \ourmethod\ such that it outputs exactly the same probability distribution.

\begin{restatable}[\ourmethod\ can implement OSAN]{proposition}{implementdistr}\label{theo:implement-distr}
Let $n$ be a natural number and consider all graphs of size $n$. Then there exists a set of weights for \ourmethod\ such that it can parameterize all the probability distributions that OSAN can, but the contrary is not true.
\end{restatable} 
\begin{proof} We start by showing that on all graphs of size $n$ \ourmethod\ can parameterize all probability distributions that OSAN can.

Recall that the selection network in OSAN is parameterized as an MPNN taking as input the original graph and returning as output $T$ probability distributions over the $n$ nodes, one for each of the $T$ subgraphs to sample (simultaneously). This means that the output of the selection network in OSAN can be seen as a matrix of size $n \times T$.
Consider now the selection network $f$ in \ourmethod. We will show that there exists a set of weights such that at every iteration $t$ with $t \in \{1, \ldots, T\}$, $f$ outputs the same probability distribution that OSAN outputs for the $t$-th subgraph.  

Let $t$ be the current iteration, and recall that $f$ outputs node features for every subgraph in the bag (\Cref{eq:hs}), which are then pooled and passed through an MLP (\Cref{{eq:joint}}).
Note that if $f$ discards the marking information, then it implies that $f$ outputs the same node features for every subgraph in the bag, and can be parameterized exactly as the MPNN used in OSAN. 
Consider therefore the case where $f$ discards any marking information, and outputs for each subgraph a matrix of size $n \times (T + 1)$, where the first $T$ columns are exactly equal to the ones in the matrix returned by OSAN, and the last column is simply all ones.
Therefore, if we pool the node representations across subgraphs using sum pooling, then the output is a matrix of size $n \times (T + 1)$, where the first $T$ columns are a multiple of the ones in OSAN, and the last column contains the number of subgraphs in the bag, or, equivalently, the iteration $t$. Finally, we rely on the ability of MLPs to memorize a finite number of input-output pairs (see \citet{yun2019small} and \citet[Lemma B.2]{yehudai2021local}). We use that result to show that there exists a point-wise MLP that transforms, for each row, the value $t$ in the last column into its one-hot representation, thus obtaining a matrix $n \times (T + T)$, where once again the first $T$ columns are identical to the ones in OSAN and the last $T$ columns are used to maintain for each row a one-hot representation of $t$. Finally, we only need a final point-wise MLP that computes, for each row, the inner product between the first $T$ entries and the last $T$ entries. Since we have a finite number of graphs, we only need to make sure that the MLP memorizes a finite number of such inner products, and the lemma mentioned above in \citet{yehudai2021local} guarantees there is such an MLP. This MLP effectively implements a selection of the $t$-th column of the matrix using the one-hot representation of $t$ encoded in the last columns, and the $t$-th column corresponds to the distribution returned by OSAN for the $t$ subgraph. This means that for every iteration $t$ \ourmethod\ can output the probability distribution returned by OSAN for the $t$-th subgraph.

Furthermore, from \Cref{theo:g-implement,theo:osan-implement} it follows that OSAN cannot parameterize all probability distributions that \ourmethod\ can. 
\end{proof}

While the above result compared the expressive power of OSAN and \ourmethod, it is also important to understand how well \ourmethod\ compares to the full-bag approach on different families of graphs. In \Cref{sec:motivation} we have already discussed that for CSL graphs it is sufficient to mark any single node to obtain the same expressive power of the full-bag.  However, this result can be extended to other non-isomorphic WL-indistinguishable families of graphs. Specifically, we show next that marking any node is sufficient to distinguish strongly regular graphs of different parameters $(n, k, \lambda, \mu)$ (i.e., number of nodes, degree, number of common neighbors with adjacent and non-adjacent nodes), but, just like 3-WL and node-based full-bag Subgraph GNNs, \ourmethod\ cannot distinguish strongly regular graphs of the same parameters.

\begin{restatable}[\ourmethod\ can distinguish certain strongly-regular graphs]{proposition}{stronglyreg}\label{theo:strongly-reg}
\ourmethod\ can distinguish any strongly regular graphs of different parameters, but cannot distinguish any strongly regular graphs of the same parameters.
\end{restatable}
\begin{proof} The proof extends \citet[Lemma 14]{bevilacqua2022equivariant} to the node-marking generation policy, and follows the same steps.
We show that the WL algorithm converges to the \emph{same} coloring of any subgraph obtained by marking a single node of a strongly regular graph, and this coloring depends only on the parameters of the strongly regular graph. Let $G$ be a strongly regular graph with parameters $(n, k, \lambda, \mu)$,
i.e., $G$ has $n$ nodes, is regular of degree $k$, any two adjacent nodes have $\lambda$ common neighbors, any two non-adjacent nodes have $\mu$ common neighbors. Consider the subgraph obtained by marking node $u$. We initialize all nodes to the same color 1, except node $u$ which has color 2.

\begin{itemize}
    \item[(Iter. 1)] Nodes adjacent to $u$ are colored 3, nodes non-adjacent to $u$ are colored 4, $u$ is colored 5.
    \item[(Iter. 2)] Nodes adjacent to $u$ are colored 6, as they each had: (i) color 3; (ii) $\lambda$ neighbors of color 3; (iii) $(k - 1 - \lambda)$ neighbors of color 4; (iv) one neighbor of color 5. To see this, recall that these nodes are adjacent to $u$, so they have $\lambda$ common neighbors with $u$, and these nodes had color 3 at iteration 1 because they are indeed also neighbors with $u$. This also implies that the remaining $(k - 1 - \lambda)$ neighbors necessarily had color 4 at iteration 2.

    Nodes non-adjacent to $u$ are colored 7, as they each had: (i) color 4; (ii) $\mu$ neighbors of color 3; (iii) $(k-\mu)$ neighbors of color 4. This is because they are non-adjacent to $u$, so they have exactly $\mu$ common neighbors with $u$, and these nodes had color 3 at iteration 2. Similarly, the remaining $(k-\mu)$ neighbors had color 4 at iteration 2.

    Node $u$ is colored 8 as it had: (i) color 5; (ii) $k$ neighbors of color 3.
\end{itemize}
Since the colors at iteration 2 are isomorphic to the colors at iteration 1, WL converged. Since this coloring is the same for any node-marked subgraph of any strongly regular graph of the same parameters, then \ourmethod\ cannot distinguish strongly regular graph of the same parameters.
Similarly, the coloring differs for strongly regular graphs of different parameters: at initialization if $n$ differs, at iteration 1 if $k$ differs, at iteration 2 if $\lambda$ or $\mu$ differs. Thus \ourmethod\ can distinguish strongly regular graphs of different parameters.
\end{proof}

\Cref{theo:strongly-reg} has important implications on the relation with higher order WL tests, as strongly regular graphs are instead distinguishable by 4-WL. Since there exist pairs of 4-WL distinguishable graphs that are not distinguishable with our approach, then \ourmethod\ is not as powerful as 4-WL.

Finally, we remark here that it is also possible to consider other families of graphs, beyond the family of non-isomorphic $(n, \csls)$-CSL graphs (\Cref{def:2csl}), where marking a limited number of nodes is sufficient for identifiability, and therefore where \ourmethod\ can attain the same expressive power of the full bag while using a significantly reduced set of subgraphs. Indeed, our theoretical results are valid for any family of graphs, as we describe next, obtained from any collection of WL-indistinguishable graphs that become distinguishable when marking any node, e.g., strongly regular graphs of different parameters. Similarly to the construction of the $(n, \csls)$-CSL family, each graph in these families is created by considering $\csls$ disconnected, non-isomorphic instances in the corresponding collection.

}
\section{Additional Experiments and Details}\label{app:exp}

\begin{table}[t]
\centering
\small
\caption{Comparison of \ourmethod\ to standard MPNNs and full-bag Subgraph GNNs demonstrate the advantages of our approach. $-$ indicates the results was not reported in the original paper.}
\resizebox{\linewidth}{!}{
\begin{tabular}{l | c c c c}
    \toprule
        \multirow{2}*{Method $\downarrow$ / Dataset $\rightarrow$} & \textsc{molesol} & \textsc{moltox21} & \textsc{molbace} &\textsc{molhiv} \\
        & \textsc{RMSE $\downarrow$} & \textsc{ROC-AUC $\uparrow$} &\textsc{ROC-AUC $\uparrow$} &\textsc{ROC-AUC $\uparrow$} \\
        \midrule  
        \textbf{\textsc{MPNNs}} \\
        $\,$ \textsc{GCN}~\citep{kipf2016semi}        &  1.114$\pm$0.036 & 75.29$\pm$0.69 & 79.15$\pm$1.44 & 76.06$\pm$0.97 \\
        $\,$ \textsc{GIN}~\citep{xu2019how}        & 1.173$\pm$0.057 & 74.91$\pm$0.51& 72.97$\pm$4.00 & 75.58$\pm$1.40 \\
        \midrule
        \textbf{\textsc{Full-Bag Subgraph GNNs}} \\
        $\,$ \textsc{Reconstr. GNN}~\citep{cotta2021reconstruction} & 1.026$\pm$0.033 & 75.15$\pm$1.40 & $-$ & 76.32$\pm$1.40 \\
        $\,$ \textsc{NGNN}~\citep{zhang2021nested} & $-$ & $-$ & $-$ & 78.34$\pm$1.86\\
        $\,$ \textsc{DS-GNN (EGO+)}~\citep{bevilacqua2022equivariant} & $-$ & 76.39$\pm$1.18 & $-$ & 77.40$\pm$2.19 \\
        $\,$ \textsc{DSS-GNN (EGO+)}~\citep{bevilacqua2022equivariant} & $-$ & 77.95$\pm$0.40 &  $-$ & 76.78$\pm$1.66 \\
        $\,$ \textsc{GNN-AK+}~\citep{zhao2022from} & $-$ & $-$ & $-$ & 79.61$\pm$1.19 \\
        $\,$ \textsc{SUN (EGO+)}~\citep{frasca2022understanding}  & $-$ & $-$ & $-$ & 80.03$\pm$0.55 \\
        $\,$ \textsc{GNN-SSWL+}~\citep{zhang2023complete}  & $-$ & $-$ & $-$ & 79.58$\pm$0.35 \\
        $\,$ \textsc{Full} & 0.847$\pm$0.015 & 76.25$\pm$1.12 & 78.41$\pm$1.94&  76.54$\pm$1.37\\
        \midrule
        \textbf{\textsc{Sampling Subgraph GNNs}} \\
        $\,$ {\color{black}\textsc{Random-OSAN}~\citep{qian2022ordered} $T=10$} & {\color{black} 1.128$\pm$0.055} & $-$ & {\color{black} 71.90$\pm$3.90} & $-$\\
        $\,$ \textsc{OSAN}~\citep{qian2022ordered} $T=10$ & 0.984$\pm$0.086 & $-$ & 72.30$\pm$6.60 & $-$ \\
        $\,$ \textsc{Random} $T=2$ & 0.951$\pm$0.039 & 76.65$\pm$0.89 & 75.36$\pm$4.28 & 77.55$\pm$1.24\\
        $\,$ \textsc{\ourmethod} $T=2$ & 0.877$\pm$0.029 & 77.47$\pm$0.82& 78.40$\pm$2.85& 79.13$\pm$0.60 \\
        $\,$ \textsc{Random} $T=5$ & 0.900$\pm$0.032 & 76.62$\pm$0.63 & 78.14$\pm$2.36 & 77.30$\pm$2.56\\
        $\,$ \textsc{\ourmethod} $T=5$ & 0.883$\pm$0.032 & 77.36$\pm$0.60& 78.39$\pm$2.28 & 78.49$\pm$1.01\\
        \bottomrule
\end{tabular}
}
\label{tab:ogb-full}
\end{table}
\begin{table}[t]
\centering
\caption{Comparison of \ourmethod\ to existing expressive methods on the \textsc{ZINC-12k} graph dataset. All methods obey to the 500k parameter budget.}
\footnotesize
\begin{tabular}{l c}
    \toprule
        Method & \textsc{ZINC (MAE $\downarrow$)} \\
        \midrule  
        \textbf{\textsc{MPNNs}} \\
        $\,$ \textsc{GCN}~\citep{kipf2016semi}        & 0.321$\pm$0.009\\
        $\,$ \textsc{GIN}~\citep{xu2019how}        & 0.163$\pm$0.004\\
        \midrule
        \textbf{\textsc{Expressive GNNs}} \\
        $\,$ \textsc{PNA}~\citep{corso2020pna} & 0.133$\pm$0.011 \\
        $\,$ \textsc{GSN}~\citep{bouritsas2022improving}        & 0.101$\pm$0.010\\
        $\,$ \textsc{CIN}~\citep{bodnar2021weisfeilerB} & 0.079$\pm$0.006 \\
        \midrule
        \textbf{\textsc{Full-Bag Subgraph GNNs}} \\
        $\,$ \textsc{NGNN}~\citep{zhang2021nested} & 0.111$\pm$0.003 \\
        $\,$ \textsc{DS-GNN (EGO+)}~\citep{bevilacqua2022equivariant} & 0.105$\pm$0.003 \\
        $\,$ \textsc{DSS-GNN (EGO+)}~\citep{bevilacqua2022equivariant} & 0.097$\pm$0.006 \\
        $\,$ \textsc{GNN-AK}~\citep{zhao2022from} & 0.105$\pm$0.010 \\
        $\,$ \textsc{GNN-AK+}~\citep{zhao2022from} & 0.091$\pm$0.011 \\
        $\,$ \textsc{SUN (EGO+)}~\citep{frasca2022understanding}  & 0.084$\pm$0.002\\
        $\,$ \textsc{GNN-SSWL}~\citep{zhang2023complete}  & 0.082$\pm$0.003\\
        $\,$ \textsc{GNN-SSWL+}~\citep{zhang2023complete}  & 0.070$\pm$0.005\\
        $\,$ \textsc{Full} & 0.087$\pm$0.003\\
        \midrule
        \textbf{\textsc{Sampling Subgraph GNNs}} \\
        $\,$ {\color{black}\textsc{Random-OSAN}~\citep{qian2022ordered} $T=2$} & { \color{black} 0.214$\pm$0.007 } \\
        $\,$ \textsc{OSAN}~\citep{qian2022ordered} $T=2$ & 0.177$\pm$0.016 \\
        $\,$ \textsc{Random} $T=2$& 0.136$\pm$0.005 \\
        $\,$ \textsc{\ourmethod} $T=2$ & 0.120$\pm$0.003\\
        $\,$ {\color{black} \textsc{Random} $T=3$} &  {\color{black}  0.128$\pm$0.004}\\
        $\,$ {\color{black} \textsc{\ourmethod} $T=3$} &  {\color{black} 0.116$\pm$0.008  }\\
        $\,$ \textsc{Random}  $T=5$ & 0.113$\pm$0.006\\
        $\,$ \textsc{\ourmethod} $T=5$ & 0.109$\pm$0.005 \\
        $\,$ {\color{black} \textsc{Random} $T=8$}&   {\color{black} 0.102$\pm$0.003 }\\
        $\,$ {\color{black} \textsc{\ourmethod} $T=8$} &  {\color{black} 0.097$\pm$0.005 }\\
        \bottomrule
    \end{tabular}
    \label{tab:zinc-full}
\end{table}
\begin{table}[t!]
\centering
\footnotesize
    \caption{Results on the \textsc{Reddit-Binary} dataset. \ourmethod\ outperforms all baselines, including Subgraph GNNs coupled with random selection policies.}
    \label{tab:tud-full}%
    \begin{tabular}{l c}
        \toprule
        Method & \textsc{RDT-B  (ACC $\uparrow$)} \\
        
        \midrule   
        
        \textsc{GIN} \citep{xu2019how} & 92.4$\pm$2.5 \\

        \midrule
         \textsc{SIN}~\citep{bodnar2021weisfeilerA} & 92.2$\pm$1.0 \\
         \textsc{CIN}~\citep{bodnar2021weisfeilerB} & 92.4$\pm$2.1 \\
        \midrule
        \textsc{Random DS-GNN}  \citep{bevilacqua2022equivariant}  $T=2$ &  91.3$\pm$1.6 \\
        \textsc{Random DSS-GNN}  \citep{bevilacqua2022equivariant}  $T=2$ & 92.7$\pm$0.8 \\

        \midrule
        \textsc{Full} & OOM \\
        \textsc{Random} $T=20$ &  92.6$\pm$1.5 \\
        \textsc{Random} $T=2$ &  92.4$\pm$1.0 \\
        \textsc{\ourmethod} $T=2$  & 93.0$\pm$0.9\\
        \bottomrule
    \end{tabular}
\end{table}

\revision{
\subsection{Additional Results on the \textsc{OGB}, \textsc{ZINC} and \textsc{Reddit-Binary} datasets}
}
We additionally compared \ourmethod\ to a wide range of baselines, including existing full-bag Subgraph GNNs, \revision{on the \textsc{OGB}, \textsc{ZINC} and \textsc{Reddit-Binary} datasets presented in the main paper}. \revision{We further report the random baseline using the prediction network considered in OSAN, whose results are taken from \citet{qian2022ordered} and which we denote as \textsc{Random-OSAN}}. The results on the various OGB datasets are presented in \Cref{tab:ogb-full}. Interestingly, \ourmethod\ consistently approaches or even outperforms computationally-intensive full-bag methods across these datasets. On the \textsc{ZINC-12k} dataset (\Cref{tab:zinc-full}), \ourmethod\ with $T=5$ performs similarly to NGNN~\citep{zhang2021nested}, which uses on average $5\times$ more subgraphs. \revision{Additional comparisons with other values of $T$, namely $T=3$ and $T=8$,  align with the observations made for $T=2$ and $T=5$. In particular, \ourmethod\ always outperforms the random baseline, and the gap is larger when the number of subgraphs $T$ is smaller, where selecting the most informative subgraphs is more crucial. 
}
Finally, results on the \textsc{Reddit-Binary} dataset (\Cref{tab:tud-full}) demonstrate the applicability of our method to cases where full-bag Subgraph GNNs, such as our implementation \textsc{Full}, as well as DS-GNN and DSS-GNN~\citep{bevilacqua2022equivariant}, are otherwise inapplicable. Notably \ourmethod\ surpasses the random counterparts that uniformly sample the same number of subgraphs, even when coupled with the DS-GNN or DSS-GNN downstream prediction model proposed in \citet{bevilacqua2022equivariant}.

\begin{table}[t]
\centering
\caption{Timing comparison on the \textsc{Reddit-Binary} dataset on an RTX A6000 GPU (48 GB). Time taken at inference on the test set. \revision{Results on the left column are obtained when setting \texttt{torch.use\_deterministic\_algorithms(True)}, while those on the right column are without it}. All values are in
milliseconds.} \label{tab:runtimes}
\footnotesize
\begin{tabular}{l r r>{\color{black}}r}
\toprule
    \multirow{2}{*}{Method} & & \multicolumn{2}{c}{RDT-B (Time (ms))} \\
    \cmidrule(l{2pt}r{2pt}){3-4}
     & & \texttt{det=True} &  \texttt{det=False} \\
    \midrule
    \textsc{Full} & & OOM & OOM \\
    \textsc{Random} & $T=20$ &1360.0$\pm$1.3 & 180.65$\pm$5.2\\
    \textsc{Random} & $T=\phantom{0}2$ & 216.7$\pm$0.4 & 39.7$\pm$2.3\\
    \textsc{\ourmethod} & $T=\phantom{0}2$  & 411.7$\pm$0.6 & 77.4$\pm$1.2\\
\bottomrule      
\end{tabular}
\label{tab:timing}
\end{table}

\begin{table}[t]
    \centering
    \footnotesize
    \caption{\revision{Timing comparison on the \textsc{ZINC-12k} dataset on an RTX A6000 GPU (48 GB). Time taken at train for one epoch and at inference on the test set. All values are in milliseconds.}} \label{tab:zinc-runtimes}
    \begin{tabular}{l r c c c }
    \toprule
        \multirow{2}{*}{Method} & & \multicolumn{3}{c}{ZINC} \\
        \cmidrule(l{2pt}r{2pt}){3-5}
         & & Train time (ms) & Test time (ms) & MAE $\downarrow$ \\
        \midrule
        GIN~\citep{xu2019how} & & 1370.10$\pm$10.79 & 84.81$\pm$0.26 & 0.163$\pm$0.004 \\
        \midrule
        \textsc{Full} & & 4872.79$\pm$14.30 & 197.38$\pm$0.30 & 0.087$\pm$0.003 \\
        OSAN~\citep{qian2022ordered} & $T=2$ & 2964.46$\pm$30.36 & 227.93$\pm$0.21 & 0.177$\pm$0.016 \\
        \textsc{Random} & $T=2$ & 2114.00$\pm$27.88 & 107.02$\pm$0.22 & 0.136$\pm$0.005 \\ 
        \ourmethod\ & $T=2$ & 2489.25$\pm$\phantom{0}9.42 & 150.38$\pm$0.33 & 0.120$\pm$0.003 \\ 
    \bottomrule
    \end{tabular}
\end{table}
\begin{table}[t]
    \centering
    \footnotesize
    \caption{\revision{Timing comparison with CIN~\citep{bodnar2021weisfeilerB} on the \textsc{ZINC-12k} dataset on an RTX A100 GPU. Time taken at inference on the test set. All values are in milliseconds.}} \label{tab:zinc-runtimes-cin}
    \begin{tabular}{l c c c }
    \toprule
        \multirow{2}{*}{Method} & & \multicolumn{2}{c}{ZINC} \\
        \cmidrule(l{2pt}r{2pt}){3-4}
         & & Time (ms) & MAE $\downarrow$ \\
        \midrule
        GIN~\citep{xu2019how} & & 126.91$\pm$0.82 & 0.163$\pm$0.004 \\ 
        CIN~\citep{bodnar2021weisfeilerB} & & 471.00$\pm$3.00 & 0.079$\pm$0.006 \\
        \ourmethod &$T = 2$ & 235.14$\pm$0.21 & 0.120$\pm$0.003 \\ 
        \ourmethod &$T = 5$ & 411.19$\pm$0.39 & 0.109$\pm$0.005 \\ 
    \bottomrule
    \end{tabular}
\end{table}
\revision{
\subsection{Time Comparisons}
We performed timing analysis on \textsc{Reddit-Binary} and \textsc{ZINC} datasets, which we discuss in the following.
}

\paragraph{Timings on \textsc{Reddit-Binary}.} To further underscore the effectiveness of \ourmethod\ on the \textsc{Reddit-Binary} dataset, we conducted a time comparison, presented in \Cref{tab:timing}. Specifically, we estimated the inference time on the entire test set using a batch size of 128 on an NVIDIA RTX A6000 GPU. \revision{We measure the times in two scenarios: when setting \texttt{torch.use\_deterministic\_algorithms(True)} and without setting it (which is equivalent to setting it to \texttt{False}). While the former ensures deterministic results, it comes at the cost of an increased running time, and it is not usually set in Subgraph GNNs implementations. Nonetheless, since determinism might be important in practice, we report both results.} We compared \ourmethod\ with the \textsc{Random} baseline, maintaining the same hyperparameters for all methods for a fair comparison. As expected, \ourmethod\ takes longer than \textsc{Random} with $T=2$, but also obtains better results (see \Cref{tab:tud-full}). To further verify its effectiveness, we ran \textsc{Random} with $T=20$, and observed that it takes more than twice the time of \ourmethod\ \revision{(and gets worse performance, see \Cref{tab:tud-full})}, showcasing how our approach is beneficial in practice. 

\revision{
\paragraph{Timings on \textsc{ZINC-12k}.} We report empirical runtimes on the ZINC dataset in \Cref{tab:zinc-runtimes}. For all methods, we estimated the training time on the entire training set as well as the inference time on the entire test set using a batch size of 128 on an NVIDIA RTX A6000 GPU. For all these experiments, we set \texttt{torch.use\_deterministic\_algorithms(False)}. During training, the runtime of \ourmethod\ is significantly closer to that of the \textsc{Random} approach than to the one of the full-bag Subgraph GNN \textsc{Full}, and \ourmethod\ is faster than OSAN. 
At inference, \ourmethod\ places in between \textsc{Random} and \textsc{Full}, and it is significantly faster than OSAN while also achieving better results.

Finally, we compare the time and the prediction performance on the ZINC dataset with the CIN model~\citep{bodnar2021weisfeilerB} in \Cref{tab:zinc-runtimes-cin}.  For all methods, we report the inference time on the entire test set using a batch size of 128. The runtime of CIN is taken from the original paper, and it is measured on an NVIDIA Tesla V100 GPU. To ensure a fair comparison, we therefore timed \ourmethod\ on the same GPU type. First, we observe that \ourmethod\ is faster than CIN, especially considering the additional preprocessing time required by CIN for the graph lifting procedures, which is not measured in \Cref{tab:zinc-runtimes-cin}. 
Second, although CIN outperforms \ourmethod, it should be noted that CIN explicitly models cycles and rings, which are obtained in the preprocessing step and serve as a domain-specific inductive bias. On the contrary, the subgraph generation policy in \ourmethod\ (i.e., node-marking) is entirely domain-agnostic and not tailored to the specific application. 
}

\begin{table}[t]
\centering
\caption{\revision{Comparison of \ourmethod\ to existing methods on the \textsc{Alchemy-12k} graph dataset.}}
\footnotesize
\begin{tabular}{l c}
    \toprule
        Method & \textsc{Alchemy (MAE $\downarrow$)} \\
        \midrule  
        \textbf{\textsc{MPNNs}} \\
        $\,$ \textsc{GIN}~\citep{xu2019how}        & 11.12$\pm$0.690 \\
        $\,$ \textsc{GIN + RWSE \& LapPE}~\citep{dwivedi2020benchmarking}        & 7.197$\pm$0.094 \\
        \midrule
        \textbf{\textsc{Full-Bag Subgraph GNNs}} \\
        $\,$ \textsc{Full} & 6.65$\pm$0.143\\
        \midrule
        \textbf{\textsc{Sampling Subgraph GNNs}} \\
        $\,$ {\color{black}\textsc{Random-OSAN}~\citep{qian2022ordered} $T=10$} & { \color{black} 12.11$\pm$0.210 } \\
        $\,$ \textsc{OSAN}~\citep{qian2022ordered} $T=10$ & 8.87$\pm$0.120 \\
        $\,$ \textsc{Random} $T=2$&  6.95$\pm$0.148\\
        $\,$ \textsc{\ourmethod} $T=2$ & 6.89$\pm$0.151  \\
        $\,$ \textsc{Random}  $T=5$ & 6.78$\pm$0.174\\
        $\,$ \textsc{\ourmethod} $T=5$ &  6.72$\pm$0.107\\
        \bottomrule
    \end{tabular}
    \label{tab:alchemy-full}
\end{table}
\revision{
\subsection{Additional Datasets}
\paragraph{Alchemy.} We conducted an additional experiment considering the Alchemy-12K dataset~\citep{chen2019alchemy}, where we compared not only to the direct baselines \textsc{Full} and \textsc{Random}, but also to a positional and structural encoding augmented method \textsc{GIN + RWSE \& LapPE}, as reported in~\citet{qian2023probabilistically}. As can be seen from \Cref{tab:alchemy-full}, these results further demonstrates the capabilities of \ourmethod, which surpasses the random baseline \textsc{Random}, and get results close to those obtained by the \textsc{Full} approach.

\begin{table}[t!]
\centering
\footnotesize
    \caption{\revision{Results on the \textsc{CSL} and \textsc{EXP} expressivity datasets. \ourmethod\ exhibits the same disambiguation capabilities of the full-bag.}}
    \label{tab:expressivity}%
    \begin{tabular}{l c c}
        \toprule
        Method & \textsc{CSL  (ACC $\uparrow$)}  & \textsc{EXP  (ACC $\uparrow$)}\\
        \midrule   
        \textsc{GIN} \citep{xu2019how} & \phantom{0}10.00$\pm$0.0 & \phantom{0}51.20$\pm$2.1\\
        \textsc{Full}  & 100.00$\pm$0.0 & 100.00$\pm$0.0\\
        \textsc{Random} $T=2$ & 100.00$\pm$0.0 & \phantom{0}89.92$\pm$2.5\\
        \ourmethod\ $T=2$ & 100.00$\pm$0.0 & 100.00$\pm$0.0\\
        \bottomrule
    \end{tabular}
\end{table}
\paragraph{Expressivity Datasets.} In an effort to answer whether \ourmethod\ maintains the expressive power of full-bag Subgraph GNNs in practice, we experimented on the CSL~\citep{murphy2019relational,dwivedi2020benchmarking} and EXP~\citep{abboud2020surprising} synthetic datasets, which are constructed in a way that 1-WL GNNs cannot outperform the performance of a random guess ($10\%$ accuracy in CSL, $50\%$ accuracy in EXP). We followed the evaluation procedure in \citet{bevilacqua2022equivariant}, consisting of a k-fold cross validation ($k=5$ in CSL, $k=10$ in EXP).
Results are reported in \Cref{tab:expressivity}.
On the CSL dataset, both RANDOM and \ourmethod\ achieve $100\%$ test accuracy, aligning with our theoretical analysis at the beginning of \Cref{sec:motivation}, where we discussed the disambiguation of these graphs.
On the EXP dataset, \ourmethod\ surpasses the performance of RANDOM, and achieves the same performance of the full-bag approach. This numerically confirms the importance of learning which subgraphs to select, rather than randomly sampling them. 

\begin{table}[t]
\centering
\caption{\revision{Comparison of counting capabilities on the datasets in \citet{chen2020can}. While reducing the set of subgraphs necessarily impacts the counting abilities of Subgraph GNNs, \ourmethod\ significantly outperforms \textsc{Random}.}}
\footnotesize
    \begin{tabular}{l c c c c}
        \toprule
        \multirow{2}{*}{Method} &  \multicolumn{4}{c}{Counting Substructures (MAE $\downarrow$)} \\
        \cmidrule(l{2pt}r{2pt}){2-5}
         & Triangle & Tailed Tri. & Star & 4-Cycle \\
        \midrule
        GIN~\citep{xu2019how} & 0.3569 & 0.2373 & 0.0224 & 0.2185 \\
        \textsc{Full} & 0.0183 & 0.0170 & 0.0106 & 0.0210 \\ 
        \textsc{Random} $T = 5$ & 0.1841 & 0.1259 & 0.0105 & 0.1180 \\
        \ourmethod\ $T = 5$ & 0.1658 & 0.1069 & 0.0100 & 0.0996 \\
        \textsc{Random} $T = 8$ & 0.1526 & 0.0999 & 0.0119 & 0.0933 \\ 
        \ourmethod\ $T = 8$ & 0.1349 & 0.0801 & 0.0100 & 0.0793 \\ 
        \bottomrule
    \end{tabular}
    \label{tab:counting}
\end{table}
\paragraph{Counting Datasets.} We tested the abilities of counting substructures on the benchmarks from \citet{chen2020can}, on which full-bag Subgraph GNNs have been extensively evaluated~\citep{zhao2022from,frasca2022understanding}. We used the dataset splits and evaluation procedure of \citet{zhao2022from,frasca2022understanding} and reported the average across the seeds in \Cref{tab:counting}.
While \ourmethod\ cannot achieve the same performance of the full-bag, it significantly surpasses the random baseline. Therefore, although counting all substructures might not be possible with a significantly reduced set of subgraphs, there exist subgraph selections that enable more effective counting. Furthermore, increasing the number of subgraphs proves advantageous in terms of the counting power.
}

\subsection{Experimental Details}
We implemented \ourmethod\ using Pytorch~\citep{pytorch} and Pytorch Geometric~\citep{fey2019fast}. We ran our experiments on NVIDIA DGX V100, GeForce
2080, NVIDIA RTX A5000, NVIDIA RTX A6000, NVIDIA GeForce RTX 4090 and TITAN V GPUs.
We performed hyperparameter tuning using the Weight and Biases framework~\citep{wandb}. We used mean aggregator to aggregate node representations across subgraphs in the selection network $f$. Similarly, in the prediction network $g$, we use mean aggregator to obtain subgraph representations given the representations of nodes in each subgraph, and mean aggregator to obtain graph representations given the subgraph representations. We always employ GIN layers~\citep{xu2019how} to perform message passing, and make use of Batch Normalization, with disabled computed statistics in the selection model (as the number of subgraphs changes every $t$). Our MLPs are composed of two linear layers with ReLU non-linearities. Unless otherwise specified, we use residual connections. Each experiment is repeated for 5 different seeds. During evaluation, we replace sampling with the $\arg \max$ function and choose the node with the highest probability. Details of hyperparameter grid for each dataset can be found in the following subsections.

\subsubsection{OGB datasets} We considered the challenging scaffold splits proposed in~\citet{hu2020graph}, and for each dataset we used the loss and evaluation metric prescribed therein. For all models (\textsc{Full}, \textsc{Random}, \ourmethod), we used Adam optimizer with initial learning rate of $0.001$. We set the batch size to 128, except for the \textsc{Full} method on \textsc{molbace} and \textsc{moltox21} where we reduced it to 32 to avoid out-of-memory errors.
We decay the learning rate by $0.5$ every 300 epochs for all dataset except \textsc{molhiv}, where we followed the choices of \citet{frasca2022understanding}, namely constant learning rate, downstream prediction network with 2 layers, embedding dimension 64 and dropout in between layers with probability in $0.5$. We used $5$ layers with embedding dimension 300 and dropout in between layers with probability in $0.5$ in the prediction network for \textsc{moltox21} and \textsc{molbace} as prescribed by \citet{hu2020graph}. For \textsc{molesol} we employed a downstream prediction network with layers in $\{2, 3\}$ and embedding dimension 64. In all datasets, for \ourmethod\ we employed a selection network architecturally identical to the prediction network, trained with a separate Adam optimizer without learning rate decay and learning rate $0.001$. We tuned the temperature parameter $\tau$ of the Gumbel-Softmax trick in $\{0.33, 0.66, 1, 2\}$. To prevent overconfidence in the probability distribution over nodes, we added a dropout during train, with probability tuned in $\{0, 0.3, 0.5\}$.

The maximum number of epochs is set to $1000$ for all models and datasets except \textsc{molhiv} and \textsc{moltox21}, where it is reduced to 500. The test metric is computed at the best validation epoch.

\subsubsection{ZINC-12k} We considered the dataset splits proposed in~\citet{dwivedi2020benchmarking}, and used Mean Absolute Error (MAE) both as loss and evaluation metric. For all models (\textsc{Full}, \textsc{Random}, \ourmethod), we used a batch size of 128 and Adam optimizer with initial learning rate of $0.001$, which is decayed by $0.5$ every 300 epochs. The maximum number of epochs is set to $1000$ for \textsc{Random} and \ourmethod\ when $T=2$, and to 600 for \textsc{Random} and \ourmethod\ when $T=5$, as well as for \textsc{Full}, due to the increase in training time. The test metric is computed at the best validation epoch. We used a downstream prediction network composed of $6$ layers, embedding dimension $64$. For \ourmethod, we employed a selection network architecturally identical to the prediction network, trained with a separate Adam optimizer without learning rate decay and learning rate $0.001$. We further tuned the temperature parameter $\tau$ of the Gumbel-Softmax trick in $\{0.33, 0.66, 1, 2\}$. Early experimentation revealed that \ourmethod\ became overconfident in the probability of specific nodes, thus preventing the exploration of other nodes through sampling. We mitigated the problem by adding dropout on the node probability distribution during train, with probability tuned in $\{0, 0.3, 0.5\}$. For $T=5$ we additionally masked out nodes corresponding to subgraphs that had already been selected, to avoid repetitions and encourage exploration especially at training time. To ensure a fair comparison, we applied a similar mechanism to \textsc{Random} with $T=5$, preventing it to sample the same subgraph (i.e., we did not allow replacement when uniformly sampling a subset of subgraphs from the bag). \revision{For the additional experiments featuring other values of $T$, namely $T=3$ and $T=8$, we use the same strategy employed for $T=5$.}

\subsubsection{Reddit-Binary} We used the evaluation procedure proposed in \citet{xu2019how}, consisting of 10-fold cross validation and metric at the best averaged validation accuracy across the folds. The downstream prediction network is composed by $4$ layers with embedding dimension $32$ and no residual connections. We use Adam optimizer with learning rate tuned in $\{0.01, 0.001\}$. For the selection network we use the same architectural choices adopted for the prediction network, and the same learning rate and optimizer type. We consider a batch size of $128$, and trained for $500$ epochs. We encouraged exploration during train through dropout on the node probability distribution, with probability tuned in $\{0, 0.5\}$. Finally, we tuned the temperature $\tau$ within the Gumbel-Softmax in $\{0.33, 0.66, 1, 2\}$.

\revision{
\subsubsection{Alchemy} We used the same dataset splits of \citet{morris2020weisfeiler}, and adopted the evaluation procedure followed by \citet{qian2022ordered}, which differs from \citet{morris2020weisfeiler} only in the additional final denormalization of the predictions and ground-truths. For all models (\textsc{Full}, \textsc{Random}, \ourmethod), we used a batch size of 128 and Adam optimizer with initial learning rate of $0.001$, which is decayed by $0.5$ every 300 epochs. The maximum number of epochs is set to $1000$, and the test metric is computed at the best validation metric. Following \citet{morris2020weisfeiler}, we used a downstream prediction network composed of $6$ layers and embedding dimension $64$. For \ourmethod, we employed a selection network architecturally identical to the prediction network, trained with a separate Adam optimizer without learning rate decay and learning rate $0.001$. We further tuned the temperature parameter $\tau$ of the Gumbel-Softmax trick in $\{0.33, 0.66, 1, 2\}$, and, to prevent overconfidence in the probability distribution over nodes, we added a dropout during train, with probability tuned in $\{0, 0.3, 0.5\}$. Similarly to ZINC, when $T=5$ we mask out already selected subgraphs both for \ourmethod\ and \textsc{Random}, to avoid repeated subgraph selections.

\subsubsection{Expressivity Datasets} Following \citet{bevilacqua2022equivariant}, we used the same training procedure and evaluation strategy considered for the \textsc{Reddit-Binary} datasets. The downstream prediction network is composed by $6$ layers with embedding dimension $64$. For the EXP dataset, we additionally used residual connections and set to False the \texttt{track\_running\_stats} flag in the batch normalization, thus employing batch statistics instead of dataset ones during evaluation. We used Adam optimizer with learning rate tuned in $\{0.01, 0.001\}$ for CSL and set to $0.0001$ for EXP. For the selection network we used the same architectural choices adopted for the prediction network and the same optimizer type, with learning rate set to $0.001$ for CSL and $0.0001$ for EXP. We consider a batch size of $8$ for CSL and $128$ for EXP, and trained for $100$ epochs. We fixed the temperature $\tau$ within the Gumbel-Softmax to 1. We mask out already selected subgraphs both for \ourmethod\ and \textsc{Random}, to avoid repeated subgraph selections.

\subsubsection{Counting Datasets} We considered the dataset splits of previous work~\citep{zhao2022from,frasca2022understanding}, and used Mean Absolute Error (MAE) both as loss and evaluation metric. For all models (\textsc{Full}, \textsc{Random}, \ourmethod), we used a batch size of 128 and Adam optimizer with initial learning rate of $0.001$, which is decayed by $0.5$ every 300 epochs. The maximum number of epochs is set to $1000$ and the test metric is measured at the best validation epoch. We used a downstream prediction network composed of $6$ layers, embedding dimension $32$. For \ourmethod, we employed a selection network architecturally identical to the prediction network, trained with a separate Adam optimizer without learning rate decay and learning rate $0.001$. We further tuned the temperature parameter $\tau$ of the Gumbel-Softmax trick in $\{0.33, 0.66, 1, 2\}$, and, to prevent overconfidence in the probability distribution over nodes, we added a dropout during train, with probability tuned in $\{0, 0.3, 0.5\}$. For both \ourmethod\ and \textsc{Random}, we masked out already selected subgraphs, therefore preventing it to sample the same subgraph.
}
\revision{
\section{Complexity Analysis}\label{app:complexity}
In this section we analyze the complexity of \ourmethod\ and compare it with its full-bag Subgraph GNN counterpart.
We analyze here an equivalent efficient implementation of our method, which differs from \Cref{alg:main-alg} in that it does not feed to the selection network the entire current bag at every iteration. Instead, it processes only the last subgraph added to the bag, and re-uses the representations of the previously selected subgraphs, passed though the selection network in previous iterations and stored.
In what follows, we consider the feature dimensionality and the number of layers to be constants. We denote $\Delta_\text{max}$ the maximum node degree of the input graph, $n$ the number of its nodes and $T$ the number of selected subgraphs.

\textbf{Time complexity.} The forward-pass asymptotic time complexity of the selection network $f$ amounts to $ \mathcal{O}(T \cdot n \cdot \Delta_\text{max})$. This arises from performing $T$ iterations, where each iteration involves selecting a new subgraph by passing the last subgraph added to the bag through the MPNN, which has time complexity $\mathcal{O}(n \cdot \Delta_\text{max})$ and re-using stored representations of previously-selected subgraphs. The forward-pass asymptotic time complexity of the prediction network $g$ is $\mathcal{O}((T+1) \cdot n \cdot \Delta_\text{max})$, as each of the $T+1$ subgraphs is processed through the MPNN, which is different than the one used in the selection, in $\mathcal{O}(n \cdot \Delta_\text{max})$. Note that, differently from the selection network, the prediction network considers $T+1$ subgraphs as it also utilizes the last selected subgraph. Therefore, \ourmethod\ has an overall time complexity of $\mathcal{O}((T + (T+1)) \cdot n \cdot \Delta_\text{max})$, i.e., $\mathcal{O}(T \cdot n \cdot \Delta_\text{max})$.

\textbf{Space complexity.} The forward-pass asymptotic space complexity of the selection network $f$ is $\mathcal{O}(T \cdot n + (n +  n \cdot \Delta_\text{max}))$, because we need to store $n$ node features for each of the $T$ subgraphs, and, at each iteration, the space complexity of the MPNN on the subgraph being processed is $\mathcal{O}(n +  n \cdot \Delta_\text{max})$ to store its node features and connectivity in memory. 
Similarly, the forward-pass asymptotic space complexity of the prediction network $g$ is $\mathcal{O}((T+1) \cdot (n +  n \cdot \Delta_\text{max}))$. Therefore, \ourmethod\ has an overall space complexity of $\mathcal{O}(T \cdot (n +  n \cdot \Delta_\text{max}))$.

Notably, \ourmethod\ is advantageous in both time and space complexity when compared to full-bag Subgraph GNNs. Indeed, as studied in \citet{bevilacqua2022equivariant}, the time complexity of full-bag node-based Subgraph GNNs amounts to $\mathcal{O}(n^2 \cdot \Delta_\text{max})$, while the space complexity is $\mathcal{O}(n \cdot (n +  n \cdot \Delta_\text{max}))$, since the number of subgraphs in the full bag is exactly $n$. Since \ourmethod\ scales linearly with $T$, when small values of $T$ are used, as in our paper, this results in a speedup by a factor of $n$ compared to full-bag Subgraph GNNs.

To grasp the impact of this reduction, let us consider for example  the REDDIT-BINARY dataset, where the average number of nodes per graph is approximately $n = 429$, while the number of selected subgraphs is $T=2$.  The time and space complexities are drastically reduced, because $T  \ll n$, and indeed \ourmethod\ can be effectively run while the full bag Subgraph GNN is computationally infeasible.
}

\end{document}